\newcommand{\alg}{\ensuremath{\mathsf{AttenAlg}}\xspace}
\newcommand{\lpalg}{\ensuremath{\mathsf{WarmUp}}\xspace}
\newcommand{\sr}{\ensuremath{\mathsf{SR}}\xspace}
\newcommand{\pro}{\ensuremath{\mathsf{Profit}}\xspace}
\def \OPTP {\ensuremath{\operatorname{OPT-P}}\xspace}
\def \OPTF {\ensuremath{\operatorname{OPT-F}}\xspace}
\def \ALGP {\ensuremath{\operatorname{ALG-P}}\xspace}
\def \ALGF {\ensuremath{\operatorname{ALG-F}}\xspace}
\title{Trading the System Efficiency for the Income Equality of Drivers in Rideshare\thanks{Copyright \copyright~2020 International Joint Conferences on Artificial
Intelligence (IJCAI). All rights reserved.}}
\author{
Yifan Xu$^1$
\and
Pan Xu$^2$
\affiliations
$^1$Key Lab of CNII, MOE, Southeast University, Nanjing, China\\
$^2$Department of Computer Science, New Jersey Institute of Technology, Newark, USA\\
\emails
xyf@seu.edu.cn,
pxu@njit.edu
}
\begin{document}

\maketitle
\begin{abstract}

Several scientific studies have reported the existence of the income gap among rideshare drivers based on demographic factors such as gender, age, race, etc. In this paper, we study the income inequality among rideshare drivers due to discriminative cancellations from riders, and the tradeoff between the income inequality (called fairness objective) with the system efficiency (called profit objective). We proposed an online bipartite-matching model where riders are assumed to arrive sequentially following a distribution known in advance. The highlight of our model is the concept of \emph{acceptance rate} between any pair of driver-rider types, where types are defined based on demographic factors. Specially, we assume each rider can accept or cancel the driver assigned to her, each occurs with a certain probability which reflects the acceptance degree from the rider type towards the driver type. We construct a bi-objective linear program as a valid benchmark and propose two LP-based parameterized online algorithms. 
Rigorous online competitive ratio analysis is offered to demonstrate the flexibility and efficiency of our online algorithms in balancing the two conflicting goals,
promotions of fairness and profit. Experimental results on a real-world dataset are provided as well, which confirm our theoretical predictions. 
  \end{abstract}

\section{Introduction}\label{sec:intro}

 Rideshares such as Uber and Lyft have received significant attention among research communities of computer science, operations research, and business, to name a few. One main research topic is the matching policy design of pairing drivers and riders, see, \eg
 \cite{danassis2019,curry2019mix,ashlagi2019edge,Patrick-18-JAI,BeiZ18,dickerson2018assigning,xu-aaai-19}. Most of the current work focuses on either the promotion of system efficiency or that of users' satisfaction or both.

In this paper, we study the fairness among rideshare drivers. There are several reports showing the earning gap among drivers based on their demographic factors such as age, gender and race, see, \eg~\cite{cook2018,rosenblat2016}. In particular, \cite{wage-gap} has reported that ``Black Uber and Lyft drivers earned \$13.96 an hour compared to the \$16.08 average for all other drivers'' and ``Women drivers reported earning an average of \$14.26 per hour, compared to \$16.61 for men''. The wage gap among drivers from different demographic groups is partially due to the discriminative cancellations from riders, which can be well spotted especially during off-peak hours when the number of riders is comparable or even less than that of drivers. Note that in rideshares like Uber and Lyft, after a driver accepts a rider: (1) all sensitive information of the driver such as name and photo will be accessible to the rider and (2) riders can cancel the driver for the first two minutes free of charge~\cite{cancel-policy}. This makes the discriminative cancellations from riders technically possible and economically worry-free. 

We aim to address the income disparity among drivers due to discriminative cancellations from riders and its tradeoff with system efficiency. Note that the two goals, promoting the group-level income equality among drivers and the system efficiency, are somewhat conflicting. Consider the off-peak hours for example, when riders are kinds of scarce resources. To maximize the system efficiency, rideshares like Uber should please riders by assigning them to their ``favorite'' drivers. This can effectively reduce any possible cancellations from riders and thus, minimize the risk of driving away riders to other rivals like Lyft. This measure, however, will offer those drivers ``popular'' among riders much more chances of getting orders than others and as a result, hurt the group-level income equality greatly.

In this paper, we propose two parameterized matching policies, which can smoothly tradeoff the above two goals with provable performances. We adopt the online-matching based model to capture the dynamics in rideshare, as commonly used before \cite{dickerson2018assigning,xu-aaai-19}. Assume a bipartite graph $G=(U,V,E)$ where $U$ and $V$ represent the sets of \emph{types} of offline drivers and online requests, respectively. Each driver type represents a specific demographic group (defined by gender, age, race, \etc) with a given location, while each request type represents a specific demographic group with a given starting and ending location. There is an edge $f=(u,v)$ if the driver (of type) $u$ is capable of serving the request (of type) $v$\footnote{For simplicity, we refer to a driver of type $u$ and a request of type $v$ directly as a driver $u$ and request $v$ when the context is clear.} (\eg the distance between them is below a given threshold). The online phase consists of $T$  rounds and in each round, a request $v \in V$ arrives dynamically. Upon its arrival an \emph{immediate and irrevocable} decision is required: either reject $v$ or assign it to a neighboring driver in $U$. We assume each $u$ has a matching capacity of $B_u \in \mathbb{Z}^{+}$, which captures the number of drivers belonging to the type $u$. Additionally, we have the following key assumptions in the model.

\xhdr{Arrivals of online requests}. We consider a finite time horizon $T$ (known to the algorithm). For each time-step or round $t \in [T] \doteq\{1,2,\ldots, T\}$, a request of type $v$ will be sampled (or $v$ arrives) from a known distribution $\{q_{v}\}$ such that $\sum_{v \in V} q_{v}=1$. Note that the sampling process is independent and identical across the online $T$ rounds. For each $v$, let $r_v=T \cdot q_v$, which is called the \emph{arrival rate} of request $v$ with $\sum_{v \in V} r_v=T$. Our arrival assumption is commonly called the \textit{known identical independent distributions} (KIID). This is mainly inspired from the fact that we can often learn the arrival distribution from historical logs~\cite{Yao2018deep,DBLP:conf/kdd/LiFWSYL18}. KIID is widely used in many practical applications of online matching markets including rideshare and crowdsourcing~\cite{xu-aaai-19,dickerson2018assigning,singer2013pricing,singla2013truthful}. 


\xhdr{Edge existence probabilities}. Each edge $f=(u,v)$ is associated with an existence probability $p_f \in (0,1]$, which captures the statistical acceptance rate of a request of type $v$ toward a driver of type $u$. The random process goes as follows. Once we assign $u$ to $v$, we observe an immediate random outcome of the existence, which is present (\ie $v$ accepts $u$) with probability $p_f$ and not ($v$ cancels $u$) otherwise.  We assume that (1) the randomness associated with the edge existence is independent across all edges; (2) the values $\{p_f\}$ are given as part of the input. The first assumption is motivated by individual choice and the second from the fact that historical logs can be used to compute such statistics with high precision.

\xhdr{Patience of requests}. Each request $v$ is associated with patience $\Del_v \in \mathbb{Z}^{+}$, which captures an upper bound of unsuccessful assignments the request $v$ can tolerate before leaving the platform. Under patience constraints, we can dispatch each request $v$ to at most $\Del_v$ different drives. Observe that we cannot broadcast $v$ to a set of at most $\Del_v$ different drives simultaneously. Instead, we should assign $v$ to at most $\Del_v$ distinct drives (maybe of the same type though) \emph{in a sequential manner} until either $v$ accepts one or $v$ leaves the system after running out of patience. We refer to this as the \emph{online probing process} (OPP). Note that OPP starts immediately after a request $v$ arrives if $v$ not rejected by the algorithm, and ends within one single round before the next request arrives.

We say an assignment $f=(u,v)$ is \emph{successful} if $u$ is assigned to $v$, and $v$~\emph{accepts} $u$ which occurs with probability $p_f$. Assume that the platform will gain a profit $w_f$ from a successful assignment $f=(u,v)$ (we call a match then). For a given policy $\ALG$, let $\cM$  be the set of (possibly random) successful assignments; we interchangeably use the term \emph{matching} to denote this set $\cM$. Inspired by the work of \cite{nanda2019,lesmana2019}, we define two objectives, namely \emph{profit} and  \emph{fairness}, which capture the system efficiency and group-level income equality among drivers, respectively. 

\begin{description}
\item [\textbf{Profit}:] The expected total profit over all matches obtained by the platform, which is defined as $ \E[\sum_{f \in \cM} w_f ]$.

\item [\textbf{Fairness}:] Let $\cM_u$ be the set of edges in $\cM$ incident to $u$. Define the fairness achieved by \ALG over all driver types as  $\min_{u \in U} \frac{\E[|\cM_u|]}{B_u}$.
\end{description}

\subsection{Preliminaries and Main Contributions}

\xhdr{Competitive ratio.} 
The competitive ratio is a commonly-used metric to evaluate the performance of online algorithms. 
Consider an online maximization problem for example. 
Let $\ALG(\mathcal{I})=\E_{I \sim \cI} [\ALG(I)]$ denote the expected performance of $\ALG$ on an input $\mathcal{I}$, where the expectation is taken over the random arrival sequence $I$.  
Let $\OPT(\mathcal{I})=\E[\OPT(I)]$ denote the expected \emph{offline optimal}, where $\OPT(I)$ refers to the optimal value after we observe the full arrival sequence $I$. 
Then, competitive ratio is defined as $\min_{\mathcal{I}} \frac{\ALG(\mathcal{I})}{\OPT(\mathcal{I})}$. 
It is a common technique to use an \LP to upper bound the $\OPT(\mathcal{I})$ (called the benchmark \LP) and hence get a valid lower bound on the target competitive ratio.  
In our paper, we conduct online competitive ratio analysis on both objectives.


\xhdr{Main contributions.} 
Our contributions can be summarized in the following three aspects. First, we propose a new online-matching based model to address the income inequality among drivers from different demographic groups and its trade-off with the system efficiency in rideshare. Second, we present a robust theoretical analysis for our model. We first construct a bi-objective linear program (\LP-\eqref{obj-1} and \LP-\eqref{obj-2}), which is proved to offer valid upper bounds for the respective maximum profit and fairness in the offline optimal. Then, we propose LP-based parameterized online algorithms \lpalg and \alg with provable performances on both objectives. We say an online algorithm achieves an $(\alp, \beta)$-competitive ratio if it achieves competitive ratios $\alp$ and $\beta$ on the profit and fairness against benchmarks \LP-\eqref{obj-1} and \LP-\eqref{obj-2}, respectively. Results in Theorems~\ref{thm:main-2} and~\ref{thm:hard} suggest that $\alg$ can achieve a nearly optimal ratio on each single objective either fairness or profit, though there is some space of improvement left for the summation of both ratios.

\begin{theorem}\label{thm:main-1}
$\lpalg(\alp, \beta)$ achieves a competitive ratio at least  $\Big(\alp \cdot \frac{1-1/e}{2}, \beta \cdot \frac{1-1/e}{2} \Big)$ simultaneously on the profit and fairness for any $\alp, \beta>0$ with $\alp+\beta \le 1$. 
\end{theorem}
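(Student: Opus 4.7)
The plan is to treat $\lpalg(\alp,\beta)$ as a randomized LP-rounding scheme driven by a convex combination of the profit-optimal and fairness-optimal fractional solutions, and then to lower-bound, for every driver and every round, the probability that the driver still has remaining capacity.

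First I would formalize the sampling rule. Let $\{x_f^P\}$ and $\{x_f^F\}$ denote the optimal solutions of \LP-\eqref{obj-1} and \LP-\eqref{obj-2} with objective values $\OPTP$ and $\OPTF$, respectively. Upon arrival of a type-$v$ request, $\lpalg(\alp,\beta)$ samples at most one neighboring edge $f=(u,v)$ with probability $(\alp x_f^P+\beta x_f^F)/r_v$ and probes $f$ if and only if $u$ is still free. Using the LP feasibility bounds $\sum_{f\ni v}x_f^P\le r_v$ and $\sum_{f\ni v}x_f^F\le r_v$ together with $\alp+\beta\le 1$, the total sampling probability at each round is at most $1$, so the rule is well-defined. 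Setting $z_f:=\alp x_f^P+\beta x_f^F$, the LP budget constraints yield $\sum_{f\ni u}z_f\,p_f\le(\alp+\beta)B_u\le B_u$ for every driver $u$.

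The heart of the proof is an availability lemma asserting
\[
\Pr[u\text{ is free at start of round }t]\;\ge\;\tfrac{1}{2}(1-1/e)\qquad\forall u\in U,\ t\in[T].
\]
I would derive this via an attenuation argument: linearity of expectation with the LP bound above caps the expected number of successful matches to $u$ by $B_u$; the standard KIID tail/concentration inequality then converts this expected-use cap into the classical $1-1/e$ availability factor; finally, the extra factor $\tfrac{1}{2}$ absorbs the correlation among rounds introduced by the edge-existence outcomes $\{p_f\}$ and the OPP dynamics governed by the patience $\Del_v$.

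Granted the availability lemma, the expected contribution of edge $f=(u,v)$ to the matching is at least
\[
\sum_{t=1}^{T} q_v\cdot\frac{z_f}{r_v}\cdot p_f\cdot\Pr[u\text{ free at }t]\;\ge\;z_f\,p_f\cdot\tfrac{1-1/e}{2},
\]
so weighting by $w_f$ and invoking $\sum_f w_f x_f^P p_f=\OPTP$ gives expected profit at least $\alp\cdot(1-1/e)/2\cdot\OPTP$, which yields a valid profit competitive ratio since $\OPTP$ upper-bounds the offline optimum profit. Applying the same computation per driver and using the fairness LP constraint $\sum_{f\ni u}x_f^F p_f\ge\OPTF\cdot B_u$ delivers $\E[|\cM_u|]/B_u\ge\beta\cdot(1-1/e)/2\cdot\OPTF$ for every $u$, hence the fairness ratio. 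The main obstacle is the availability lemma: the events ``$u$ is matched at round $s$'' are coupled across $s$ through the shared arrival process, the independent existence coins $\{p_f\}$, and the OPP, so a naive independence argument fails; I would address this either by an inductive monotonicity argument on the availability probability from round to round, or by a Poissonization coupling that decouples rounds while respecting the patience $\Del_v$ and the edge existence probabilities.
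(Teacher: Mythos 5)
Your proposal has a genuine gap at its very first step: the single-edge sampling rule is not well-defined under the actual LP. You normalize by claiming $\sum_{f \in E_v} x_f^P \le r_v$ and $\sum_{f \in E_v} x_f^F \le r_v$, but these are not constraints of \LP-\eqref{obj-1}/\LP-\eqref{obj-2}; the LP only guarantees $\sum_{f \in E_v} x_f p_f \le r_v$ (Constraint~\eqref{cons:mat-v}) and $\sum_{f \in E_v} x_f \le \Del_v r_v$ (Constraint~\eqref{cons:pat-v}). Since the $p_f$ can be small, the quantities $(\alp x_f^P + \beta x_f^F)/r_v$ can sum to as much as $\Del_v > 1$, so you cannot ``sample at most one neighboring edge'' with those probabilities. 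This is exactly why the paper's $\sr$ subroutine uses GKPS dependent rounding to select a set of up to $\Del_v$ edges (degree preservation) and then probes them sequentially in a uniformly random order until $v$ is matched. Once you adopt that (correct) algorithm, your final computation $\sum_t q_v \cdot \frac{z_f}{r_v}\cdot p_f \cdot \Pr[u \text{ free at } t]$ overcounts: conditioned on $u$ being free and $f$ being selected by the rounding, $f$ may never be probed because some $f'$ earlier in the random permutation already matched $v$. Bounding this within-round competition --- via negative correlation of GKPS, $\E[Y_{f'}] = 1/2$ for the permutation order, and Constraint~\eqref{cons:mat-v} --- is precisely where the paper's factor $1/2$ comes from (Lemma~\ref{lem:2}).

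Relatedly, your availability lemma misplaces that factor. In the paper the cross-round analysis is clean and loses nothing beyond $1-1/e$: by the marginal-distribution property of GKPS and Constraint~\eqref{cons:match-u} (with $B_u = 1$ after the reduction of Lemma~\ref{lem:unit}), each round matches $u$ with probability at most $1/T$, giving $\Pr[\SF_{u,t}] \ge (1-1/T)^{t-1}$ (Lemma~\ref{lem:1}); summing over $t$ yields the $(1-1/e)$ factor. There is no ``correlation among rounds'' to absorb with an extra $1/2$ --- the rounds are handled by a direct product over $\ell < t$. If you were to prove a uniform per-round availability of $\frac{1}{2}(1-1/e)$ as you propose \emph{and} then also pay the genuine within-round factor of $1/2$ that the correct algorithm forces, you would end up with $\frac{1-1/e}{4}$, not the claimed $\frac{1-1/e}{2}$. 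To repair the argument you need (i) the reduction to unit capacities, (ii) the per-round availability bound $(1-1/T)^{t-1}$ with no constant loss, and (iii) a separate within-round lemma showing each selected edge survives the random permutation with probability at least $1/2$.
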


\begin{restatable}{theorem}{mainTwo}\label{thm:main-2}
$\alg(\alp, \beta)$ achieves a competitive ratio at least  $\Big(\alp \cdot \frac{e-1}{e+1}, \beta \cdot \frac{e-1}{e+1} \Big) \sim (0.46 \cdot\alp, 0.46 \cdot\beta)$ simultaneously on the profit and fairness for any $\alp, \beta>0$ with $\alp+\beta \le 1$. 
\end{restatable}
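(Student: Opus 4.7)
The plan is to analyze $\alg(\alp,\beta)$ as a single attenuated LP-rounding scheme, improving on the warm-up's factor $\frac{1-1/e}{2}$ by working with one blueprint that serves both objectives simultaneously, rather than splitting the algorithm's random coin between them. Under the hypothesis $\alp+\beta\le 1$, I would first combine the optimal solutions $\mathbf{x}^{(P)}$ of $\LP$-\eqref{obj-1} and $\mathbf{x}^{(F)}$ of $\LP$-\eqref{obj-2} into $\mathbf{x}=\alp\,\mathbf{x}^{(P)}+\beta\,\mathbf{x}^{(F)}$. This mixture is feasible because both sides live in the same matching polytope, and it simultaneously secures $\sum_f w_f p_f x_f \ge \alp\cdot\OPTP$ and $\min_u \sum_{v\in\nei(u)} p_{(u,v)} x_{(u,v)}/B_u \ge \beta\cdot\OPTF$. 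Consequently, both target ratios reduce to a single uniform edge-wise claim of the form $\Pr[f\in\cM]\ge \frac{e-1}{e+1}\cdot p_f x_f$.

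The online rule I expect $\alg$ to implement is: upon the arrival of $v$ in round $t$, sample a probing sequence of length at most $\Del_v$ over $\nei(v)$ according to the conditional distribution $\{x_f/r_v\}_{f=(u,v)}$; walk the list, and for each candidate edge $f=(u,v)$ actually probe $u$ only after passing an attenuation test whose acceptance probability $\phi_u(t)\in[0,1]$ depends on the running availability $\theta_u(t):=\Pr[u\text{ still has capacity at the start of round }t]$. The test is tuned so that, unconditionally, the probability that $u$ is probed by $v$ in round $t$ equals $\gamma\cdot x_f/T$ for a universal constant $\gamma$.

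The main obstacle is showing that this attenuation is well-defined, \ie that $\phi_u(t)\le 1$ throughout; this reduces to the invariant $\theta_u(t)\ge \gamma$ for every $u$ and every $t\in[T+1]$. I plan to prove this by induction on $t$: the step relates $\theta_u(t+1)$ to $\theta_u(t)$ via the per-round consumption of $u$'s capacity, which by the attenuation design is at most $\gamma/T$ times the LP-load $\sum_{v\in\nei(u)} p_{(u,v)} x_{(u,v)} \le B_u$. After passing to the continuous-time limit $(1-x/T)^T\to e^{-x}$, the invariant closes at a self-referential inequality whose tight solution is exactly $\gamma=\frac{e-1}{e+1}$. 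A subtle point is that the patience $\Del_v$ couples probes targeting the same driver within a single round, so the per-round consumption has to be bounded via conditional probabilities rather than a naive summation over the probing list. Once the invariant is secured, summing the per-round matching contributions $\gamma\cdot p_f x_f/T$ over $t\in[T]$ yields $\Pr[f\in\cM]\ge \gamma\cdot p_f x_f$, and the profit and fairness ratios follow immediately from the simultaneous LP guarantees established above.
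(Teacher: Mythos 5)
Your high-level reorganization---folding the two LP solutions into the single sub-convex combination $\alp\x^*+\beta\y^*$ (feasible since the polytope is convex and contains $\mathbf{0}$) and then proving one uniform per-edge guarantee---is a legitimate alternative to the paper's presentation, which instead keeps the $\alp/\beta$ coin flip between $\sr(\x^v)$ and $\sr(\y^v)$ and attenuates each branch separately; the two decompositions deliver identical bounds, so nothing is lost or gained there.

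The gap is in the attenuation schedule. You insist on a \emph{constant} unconditional probe rate $\gamma x_f/T$ in every round and claim the feasibility condition closes at $\gamma=\frac{e-1}{e+1}$. It does not. In the paper the per-round rate is time-varying, $\gam_t\mu_t x_f^*/T$ with $\gam_1=1$, $\mu_t=1-\gam_t/2$, $\gam_{t+1}=\gam_t(1-\mu_t/T)$, and $\frac{e-1}{e+1}$ is the \emph{average} $\frac{1}{T}\sum_t\gam_t\mu_t=\gam_1-\gam_{T+1}\to 1-\frac{2}{e+1}$, not a per-round lower bound: the rate starts at $1/2$ and decays to roughly $0.393$. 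A constant-rate scheme is instead capped by the worst round. Writing $\theta(t)$ for availability, a constant rate $\gamma$ forces $\theta(t)\approx 1-\gamma t/T$ (tight when Constraint~\eqref{cons:match-u} is tight), and the achievable per-round rate is $\theta(t)\bigl(1-\theta(t)/2\bigr)$---the second factor being the competition loss inside $E_v$ from edges that precede $f$ in the random order, which your reduction of ``$\phi_u(t)\le 1$'' to ``$\theta_u(t)\ge\gamma$'' drops entirely. That factor is the content of Lemma~\ref{lem:app-1}, and it relies on $\E[H_{f'}\mid H_f=1]\le\gam_t$, which in turn requires all drivers' availabilities to be \emph{equalized} to a common $\gam_t$ by vertex-attenuation, not merely tracked per vertex as in your scheme. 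Minimizing $\theta(t)\bigl(1-\theta(t)/2\bigr)$ over $t$ and closing the self-referential inequality gives $\gamma=(1-\gamma^2)/2$, i.e.\ $\gamma=\sqrt{2}-1\approx 0.414<\frac{e-1}{e+1}\approx 0.462$. So your invariant, carried out correctly, proves a strictly weaker theorem; the front-loaded, time-varying schedule $\mu_t$ is essential to reach $\frac{e-1}{e+1}$.
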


\begin{theorem}\label{thm:hard}
No algorithm can achieve an $(\alp, \beta)$-competitive ratio simultaneously on the profit and fairness with $\alp+\beta>1$ or $\alp>0.51$ or $\beta>0.51$ using \LP-\eqref{obj-1} and \LP-\eqref{obj-2} as benchmarks.
\end{theorem}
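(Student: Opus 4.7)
The plan is to establish the three upper bounds---$\alpha+\beta\le 1$, $\alpha\le 0.51$, and $\beta\le 0.51$---by exhibiting explicit adversarial \osmk\ instances on which both benchmark values and the best achievable online performance can be computed in closed form. For $\alpha+\beta\le 1$, I will use a single-round instance with $T=1$, one request type $v$ arriving w.p.~$1$, and two drivers $u_1,u_2$ of unit capacity; edge $(u_1,v)$ has existence probability $1$ and unit weight, while $(u_2,v)$ has existence probability $\epsilon$ and zero weight. A direct evaluation of \LP-\eqref{obj-1} and \LP-\eqref{obj-2} yields $\OPTP=1$ and $\OPTF=\epsilon/(1+\epsilon)$. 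Any online policy reduces to a pair of routing probabilities $(p_1,p_2)$ with $p_1+p_2\le 1$, and a two-variable optimization shows that the induced ratios satisfy $\alpha+\beta\le (1+2\epsilon)/(1+\epsilon)$, so letting $\epsilon\to 0$ closes this part.

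For the individual bounds $\alpha,\beta\le 0.51$, I will construct a star-type \osmk\ instance with $n$ symmetric drivers (each $B_u=1$) and a scarce online request type, choosing edge existence probabilities and weights so that the LP fractional optimum is saturated while the best online performance is controlled by an expression of the form $1-(1-c/n)^n$ for a carefully tuned constant $c>0$. The $\alpha$ instance asymmetrizes the weights to concentrate the profit benchmark on a single high-weight edge, whereas the $\beta$ instance keeps symmetric weights and exploits the min-over-drivers definition of fairness to force any online policy to under-serve at least one driver under random arrivals. In each case the argument reduces to (a) writing the LP optimum in closed form and verifying it equals the natural fractional allocation, and (b) bounding the expected online objective via conditional expectations over the arrival sequence together with a union-type bound over the committed assignments.

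The main obstacle is the numerical threshold $0.51$: unlike the canonical $1-1/e\approx 0.632$ hardness for single-objective online matching, this constant emerges from solving an extremal problem over a parameterized family of instances and is sensitive to the precise form of the LP constraints---arrival, capacity, patience, and the online probing process. Tightening the bound enough to close the gap to $\alg$'s guarantee $(e-1)/(e+1)\approx 0.46$ in Theorem~\ref{thm:main-2} requires simultaneously balancing all these constraints, and I expect this fine-tuning---rather than the LP evaluation or the expected-value computation---to be the crux of the proof.
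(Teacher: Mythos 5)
Your argument for the bound $\alp+\beta\le 1$ is essentially the paper's: the paper uses $n$ disjoint star gadgets (Example~\ref{exam:hard}), each with one edge of existence probability $1$ and one of probability $\ep$, under unit patience, so that the profit-optimal LP solution puts all probing mass on the sure edge while the fairness-optimal solution puts almost all mass on the $\ep$-edge; any online policy must split a total probing budget of $\Del_v r_v=1$ per gadget between the two, giving $\frac{\ALGP}{\OPTP}+\frac{\ALGF}{\OPTF}\le 1+O(\ep)$. Your single-round, single-gadget variant with a zero-weight second edge computes the same benchmark values ($\OPTP=1$, $\OPTF=\ep/(1+\ep)$) and the same tension, and the two-variable optimization you describe does close this part. (The paper keeps $n$ gadgets and $T=n$ so that the same example also yields a $1-1/e$ profit upper bound via the $(1-1/T)^{T}$ arrival calculation, but that is not needed for the theorem as stated.)

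The genuine gap is the pair of individual bounds $\alp\le 0.51$ and $\beta\le 0.51$. The paper does not derive the constant $0.51$ from a tuned family of the form you describe; it imports it wholesale from the hardness example in Lemma 5 of Section 3.1 of \cite{fata2019multi}, which shows that for stochastic matching with patience constraints no online algorithm can beat $0.51$ against exactly this style of LP benchmark (the LP overestimates what is achievable even offline once the sequential probing process and patience budgets interact). Your sketch proposes bounding the online performance by an expression of the form $1-(1-c/n)^n$; for any natural choice of $c$ this produces constants like $1-1/e\approx 0.632$, not $0.51$, and nothing in your plan identifies the mechanism that pushes the bound below $1-1/e$. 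You yourself flag the tuning of the constant as ``the crux,'' which is an accurate self-assessment: without either reproducing the Fata et al.\ construction or supplying an alternative instance with a verified integrality/adaptivity gap of $0.51$, the second and third clauses of the theorem remain unproven. Note also that the fairness case needs one extra (easy) observation the paper leaves implicit: the single-objective hardness instance must be embedded so that the min-over-drivers objective inherits the same $0.51$ barrier, which follows by symmetrizing across identical driver copies.
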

 
Last, we test our model and algorithms on a real dataset collected from a large on-demand taxi dispatching platform. Experimental results confirm our theoretical predictions and demonstrate the flexibility of our algorithms in tradeoffing the two conflicting objectives and their efficiency compared to natural heuristics.

\section{Related Work}

Fairness in operations is an interesting topic which has a large body of work~\cite{Bertsimas2011ThePO,Bertsimas2012OnTE,Chen2018WhyAF,Lyu2019MultiObjectiveOR,Cohen2019PriceDW,Ma2020GrouplevelFM,Chen2020SameDayDW}.
Here is a few recent work addressing the fairness issue in rideshares.~\cite{suhr2019} proposed two notions of amortized fairness for fair distribution of income among rideshare drivers, one is related to absolute income equality, while the other is averaged income equality over active time.~\cite{lesmana2019} considered nearly the same two objectives as proposed in this paper. Note that both of the aforementioned work considered  an essential offline setting in the way that all arrivals of online requests are known in advance by considering a short time window. Additionally, both ignore the potential cancellations from riders, and assume each rider will accept the assigned driver surely (\ie all $p_f=1$). The recent work~\cite{nanda2019} studied an interesting ``dual'' setting to us. They focused on the peak hours and examined the fairness on the rider side due to discriminative cancellations from drivers. 

Our model technically belongs to a more general optimization paradigm, called \emph{Multi-Objective Optimization}. Here are a few theoretical work which studied the design of approximation or online algorithms to achieve a bi-criterion approximation and/or online competitive ratios, see,~\eg~\cite{ravi1993many,grandoni2009,korula2013bicriteria,aggarwal2014,esfandiari2016bi}. The work of \cite{bansal2012lp,BSSX17,fata2019multi} have the closest setting to us: each edge has an independent existence probability and each vertex from the offline and/or online side has a  patience constraint on it. However, all investigated one single objective: maximization of the total profit over all matched edges.

\section{Valid Benchmarks for Profit and Fairness}

We first present our benchmark LPs and then an LP-based parameterized algorithm.
For each edge $f=(u,v)$, let $x_f$ be the expected number of probes on edge $f$ (\ie assignments of $v$ to $u$ but not necessarily matches) in the offline optimal. For each $u$ ($v$), let $E_u$ ($E_v$) be the set of neighboring edges incident to $u$ ($v$). Consider the following bi-objective LP.

\begin{alignat}{2}
\textstyle \max & \textstyle ~~\sum_f w_f x_f p_f  && \label{obj-1} \\
\textstyle \max \min_{u \in U} & \textstyle ~~\frac{\sum_{f \in E_u} x_f p_f}{B_u} &&  \label{obj-2} \\
\text{s.t.} & \textstyle \sum_{f \in E_u} x_f p_f \le B_u  &&~~ \textstyle \forall u \in U \label{cons:match-u} \\ 
 & \textstyle \sum_{f \in E_v} x_f \le \Delta_v \cdot r_v  && ~~ \textstyle \forall v \in V \label{cons:pat-v} \\ 
  & \textstyle \sum_{f \in E_v} x_f p_f \le r_v  && ~~ \textstyle \forall v \in V \label{cons:mat-v}  \\
 & \textstyle 0 \le x_f \le r_v  && ~~ \textstyle \forall f \in E_v\label{cons:edge}
 \end{alignat}

Let \LP-\eqref{obj-1} and \LP-\eqref{obj-2} denote the two LPs with the respective objectives \eqref{obj-1} and   \eqref{obj-2}, each with Constraints \eqref{cons:match-u}, \eqref{cons:pat-v}, \eqref{cons:mat-v}, \eqref{cons:edge}. Note that we can rewrite Objective \eqref{obj-2} as a linear one like $\max \eta$ with additional linear constraints as $\eta \le \frac{\sum_{f \in E_u} x_f p_f}{B_u} $ for all $u \in U$. For presentation convenience, we keep the current compact version. The validity of \LP-\eqref{obj-1} and \LP-\eqref{obj-2} as benchmarks for our two objectives can be seen in the following lemma.

\begin{lemma} \label{lem:LP}
 \LP-\eqref{obj-1} and \LP-\eqref{obj-2} are valid benchmarks for the two respective objectives, profit and fairness. In other words, the optimal values to \LP-\eqref{obj-1} and \LP-\eqref{obj-2} are valid upper bounds for the expected profit and fairness achieved by the offline optimal, respectively.
\end{lemma}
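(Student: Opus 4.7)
The plan is to exhibit, for any offline policy (in particular, for the offline optimum of either objective), a feasible solution to the LP system above whose two objectives \eqref{obj-1} and \eqref{obj-2} evaluate to exactly that policy's expected profit and fairness, respectively. Since the LP optimum dominates any feasible value, this immediately yields both benchmark inequalities.

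Concretely, fix any offline policy and, for each edge $f=(u,v)$, let $x_f$ denote the expected number of probes that the policy performs on $f$, where the expectation is taken over both the KIID arrival sequence of length $T$ and the independent edge-existence draws. The key identity is that the expected number of successful matches on $f$ equals $x_f \cdot p_f$: each edge has a single one-shot existence outcome independent of all others, so within a single arrival of $v$ the policy probes $f$ at most once, and taking expectations over the entire horizon the count of matches on $f$ equals $p_f$ times the count of probes.

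Given this identity, Objective~\eqref{obj-1} evaluated at $x$ equals $\E\bigl[\sum_{f \in \cM} w_f\bigr]$, the expected profit of the policy, and Objective~\eqref{obj-2} evaluated at $x$ equals $\min_{u \in U} \E[|\cM_u|]/B_u$, its fairness. I would then verify feasibility constraint by constraint. Constraint~\eqref{cons:match-u} follows from the capacity bound $\E[|\cM_u|] \le B_u$. Constraint~\eqref{cons:pat-v} follows from the fact that per arrival of $v$ the OPP performs at most $\Del_v$ probes across $E_v$, combined with the expected number of arrivals of $v$ being $r_v$. Constraint~\eqref{cons:mat-v} uses that at most one probe succeeds per arrival of $v$, since the OPP terminates upon acceptance. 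Constraint~\eqref{cons:edge} uses that each edge is probed at most once per arrival of $v$, so its expected total probe count is at most $r_v$.

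The most delicate point, and the one I would be most careful about, is disentangling the two patience-type constraints~\eqref{cons:pat-v} and~\eqref{cons:edge}. Multiple drivers of the same type $u$ could in principle be probed during a single arrival of $v$, but their shared edge-existence outcome is realized only once, so each edge $f \in E_v$ supports at most one probe per arrival, giving $x_f \le r_v$; meanwhile, the patience $\Del_v$ caps the total number of probes across all of $E_v$ per arrival, giving $\sum_{f \in E_v} x_f \le \Del_v \cdot r_v$. Once these semantics are settled, the rest of the argument is routine bookkeeping, and plugging in the offline optima for profit and fairness respectively completes the proof.
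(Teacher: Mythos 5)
Your proposal is correct and follows essentially the same route as the paper's proof: take $x_f$ to be the expected number of probes on $f$ under the offline optimum, observe via linearity of expectation that the two objectives then equal the expected profit and fairness exactly, and verify Constraints~\eqref{cons:match-u}--\eqref{cons:edge} one by one from the capacity, patience, single-match-per-arrival, and single-probe-per-edge-per-arrival semantics. Your extra care in separating \eqref{cons:pat-v} from \eqref{cons:edge} is a slightly more explicit justification of a step the paper states tersely, but it is not a different argument.
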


\begin{proof}
We can verify that objective functions~\eqref{obj-1} and~\eqref{obj-2} each captures the exact expected profit and fairness achieved by the offline optimal by the linearity of expectation. To prove the validity of the benchmark for each objective, it suffices to show the feasibility of all constraints for any given offline optimal. Recall that for each edge $f$, $x_f$ denotes the expected number of probes on $f$ (\ie assignments of $u$ to $v$ but not necessarily matches) in the offline optimal. Constraint \eqref{cons:match-u} is valid since each driver $u$ has a matching capacity of $B_u$. Note that the expected arrivals of $v$ during the whole online phase is $r_v$ and $v$ can be probed at most $\Del_v$ times upon each online arrival. Thus, the expected number of total probes and matches over all edges incident to $v$ should be no more than $r_v \Del_v$ and $r_v$, respectively. This rationalizes Constraints \eqref{cons:pat-v} and \eqref{cons:mat-v}. The last constraint is valid, since for each edge, the expected number of probes should be no more than that of arrivals. Therefore, we justify the feasibility of all constraints for any given offline optimal. \end{proof}

\section{LP-based Parameterized Algorithms}

The following lemma suggests that for any online algorithm \ALG, the worst-case scenario (\ie the instance on which \ALG achieves the lowest competitive ratio) arrives when each driver type has a unit matching capacity. 

\begin{restatable}{lemma}{lemunit}\label{lem:unit}
Let $\ALG$ be an online algorithm achieving an $(\alp, \beta)$-competitive ratio on instances with unit matching capacity (\ie all $B_u$=1). We can twist $\ALG$ to $\ALG'$ such that $\ALG'$ achieves at least  an $(\alp, \beta)$-competitive ratio on instances with general integral matching capacities. 
\end{restatable}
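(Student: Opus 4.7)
The plan is a standard cloning reduction. Given an instance $\cI$ with integral capacities $\{B_u\}_{u \in U}$, I would construct a unit-capacity instance $\cI'$ by replacing each driver type $u$ with $B_u$ identical copies $u_1,\dots, u_{B_u}$, where each copy $u_i$ inherits all edges, weights $w_f$, and existence probabilities $p_f$ of $u$; the request side $V$, arrival rates $\{r_v\}$, and patience values $\{\Delta_v\}$ remain unchanged. I define $\ALG'$ on $\cI$ so that it internally simulates $\ALG$ on $\cI'$ with coupled randomness: whenever $\ALG$ probes a copy $u_i$, $\ALG'$ probes a currently available driver of type $u$ in $\cI$ and uses the same Bernoulli$(p_f)$ draw to decide whether the probe becomes a match. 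Since $\ALG$ respects the unit capacity of each $u_i$, the total matches to type $u$ in the simulated execution satisfy $\sum_i |\cM_{u_i}| \le B_u$, so the simulation is feasible in $\cI$, and the ``$\le \Delta_v$ distinct drives per OPP'' rule transfers because distinct copies in $\cI'$ map to distinct drivers of type $u$ in $\cI$.

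Write $\OPT_1(\cdot)$ and $\OPT_2(\cdot)$ for the optimal values of \LP-\eqref{obj-1} and \LP-\eqref{obj-2}, respectively. The first substantive step is to show $\OPT_1(\cI') \ge \OPT_1(\cI)$ and $\OPT_2(\cI') \ge \OPT_2(\cI)$. Given any optimal $\{x_f\}$ for $\cI$, I split it uniformly by setting $x_{f_i} = x_f / B_u$ for every copy-edge $f_i = (u_i, v)$. Then constraint \eqref{cons:match-u} at $u_i$ reads $\sum_{f_i \in E_{u_i}} x_{f_i} p_f = \tfrac{1}{B_u}\sum_{f \in E_u} x_f p_f \le 1$; constraints \eqref{cons:pat-v}, \eqref{cons:mat-v}, \eqref{cons:edge} at each $v$ sum back to the originals since $\sum_i x_{f_i} = x_f$; the profit objective \eqref{obj-1} is preserved because $\sum_{f_i} w_f x_{f_i} p_f = \sum_f w_f x_f p_f$; and for the fairness objective \eqref{obj-2}, the value at each copy $u_i$ equals $\sum_{f \in E_u} x_f p_f / B_u$, so $\min_{u_i}$ in $\cI'$ recovers $\min_u \sum_f x_f p_f / B_u$ in $\cI$.

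Next I would compare the algorithmic outputs under the coupling. The matching produced by $\ALG'$ on $\cI$ satisfies $|\cM_u| = \sum_{i=1}^{B_u} |\cM_{u_i}|$ for every $u$, so the profit is pathwise identical across the two executions. For the fairness objective, averaging yields
\begin{equation*}
\min_{u \in U}\frac{\E[|\cM_u|]}{B_u} = \min_{u \in U}\frac{1}{B_u}\sum_{i=1}^{B_u}\E[|\cM_{u_i}|] \;\ge\; \min_{u_i}\E[|\cM_{u_i}|].
\end{equation*}
Combining this with the assumed $(\alpha,\beta)$-guarantee of $\ALG$ on the unit-capacity $\cI'$ closes the chain: the profit of $\ALG'$ on $\cI$ is at least $\alpha\,\OPT_1(\cI') \ge \alpha\,\OPT_1(\cI)$, and the fairness of $\ALG'$ on $\cI$ is at least $\beta\,\OPT_2(\cI') \ge \beta\,\OPT_2(\cI)$.

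The main obstacle is establishing the LP-benchmark monotonicity in the correct direction for \emph{both} objectives simultaneously via the single uniform split $x_{f_i} = x_f/B_u$; one must verify that all four LP constraints continue to hold and that the profit total as well as every per-vertex fairness ratio are preserved exactly, since any slack would break the final chain of inequalities. The coupling/simulation part is routine, but a bit of care is required to argue that every probe chosen by $\ALG$ on $\cI'$ corresponds to a legal capacity- and patience-respecting probe in $\cI$ under the mirror execution.
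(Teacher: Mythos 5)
Your proposal is correct and follows essentially the same route as the paper's proof: the cloning reduction with $B_u$ unit-capacity copies per driver type, the uniform split $x_{f_i}=x_f/B_u$ to relate the two LP optima, and the averaging inequality $\min_u \frac{1}{B_u}\sum_i \E[|\cM_{u_i}|] \ge \min_{u_i}\E[|\cM_{u_i}|]$ for the fairness objective. The only (harmless) difference is that you prove just the one direction of LP-benchmark monotonicity actually needed, whereas the paper argues equality of the LP values in both directions.
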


\begin{proof}
Let $\ALG$ be online algorithm with an $(\alp, \beta)$-competitive ratio when all $u$ have a unit matching capacity. Consider a given instance $I'$ with general matching capacities. We can create a corresponding instance $I$ with unit capacities by replacing each $u$ with a set $S_u$ of identical copies of $u$ where $|S_u|=B_u$. Note that for any feasible solution $\{x'_f\}$ to  \LP-\eqref{obj-1} and \LP-\eqref{obj-2} on the instance $I'$, we can create another feasible solution $\{x_f\}$ to  \LP-\eqref{obj-1} and \LP-\eqref{obj-2} on $I$ where $x_f =x'_f/B_u$ for every $f \in S_u$ and $u$. We can verify that the objective values of \LP-\eqref{obj-1} and \LP-\eqref{obj-2} each remains the same on $\{x'_f\}$ and $\{x_f\}$. Similarly, let $\{x_f\}$ be a given feasible solution of \LP-\eqref{obj-1} and \LP-\eqref{obj-2} on the instance $I$, we can create a feasible solution $\{x'_f\}$ on $I'$ where $x'_f= \sum_{f \in S_u}x_f$ for each $u$. We can verify that the objective value of \LP-\eqref{obj-1} remains the same on $\{x_f\}$ and $\{x'_f\}$. Let $\{y'_f\}$ be an optimal solution \LP-\eqref{obj-2} on the instance $I$ and w.lo.g. assume that all $y'_w\doteq \sum_{f \in E_w} y'_f p_f$ are the same over all $w \in \cup_{u} S_u$ (otherwise we can decrease some $\{y'_f\}$ until all $y'_w$ are the same).  
Thus, we claim that the objective value of \LP-\eqref{obj-2} remains the same on $\{y'_f\}$ and $\{y_f\}$. From the above analysis, we conclude that the optimal LP values of  \LP-\eqref{obj-1} and \LP-\eqref{obj-2} each remains the same on $I'$ and $I$.

Now let $\ALG'$ be such an online algorithm on $I'$ that it first replaces $I'$ with $I$ and then apply $\ALG$ to $I$. We can verify that $\ALG'$ is valid online algorithm on $I'$ since (1) each $u$ will be matched at most $B_u$ times since each $w \in S_u$ will be matched at most once when applying $\ALG$ to $I$; and (2) each $v$ will be probed at most $\Del_v$ times upon arrival. Let $P(I)$ and $F(I)$ be the profit and fairness achieved by $\ALG$ on $I$. Similarly let $P(I')$ and $F(I')$ be the profit and fairness achieved by $\ALG'$ on $I'$. Let $W=\cup_u S_u$ be the set of offline vertices in $I$, and for each $w \in W$, let $\gam_w$ be the expected number of matches of $w$ when $\ALG$ is applied to $I$. Observe that $P(I')=P(I)$ and 
\[F(I')=\min_u \frac{\sum_{w \in S_u} \gam_w }{B_u}\ge \min_{w \in \cup_u S_u} \frac{\gam_w}{1}=F(I).\]

Note that benchmark LP values of  \LP-\eqref{obj-1} and \LP-\eqref{obj-2} each remains the same on $I$ and $I'$. Thus, we get our claim. 
\end{proof}

From Lemma~\ref{lem:unit}, we assume \emph{unit capacity for all driver types} throughout this paper w.l.o.g. In the following, we will present a warm-up algorithm ($\lpalg$) and then another refined algorithm (\alg), which can be viewed as a polished version of $\lpalg$ with simulation-based attenuation techniques. The main idea of $\alg$ is primarily inspired by the work \cite{BSSX17}. Both $\lpalg$ and $\alg$ invoke the following dependent rounding techniques (denoted by GKPS) introduced by \cite{gandhi2006dependent}. For simplicity, we state a simplified version of GKPS tailored to star graphs which suffices in our paper.

Recall that $E_v$ is the set of edges incident to $v$ in the compatible graph $G$. GKPS is such a dependent rounding technique that takes as input a fractional vector $\z=\{z_f, f \in E_v, z_f \in [0,1]\}$ on $E_v$,  and output a random binary vector $\Z=\{Z_f, f \in E_v\}$, which satisfied the following properties. (1) \textbf{Marginal distribution}: $\E[Z_f]=z_f$ for all $f \in E_v$; (2) \textbf{Degree preservation}: $\Pr[\sum_{f \in E_v} Z_f \le \sum_{f \in E_v} z_f]=1$; (3) \textbf{Negative correlation}: For any pair of edges  $f, f' \in E_v$, $\E[Z_f=1|Z_{f'}=1] \le z_f$.

Throughout this section, we assume (1) $\x^*=\{x^{*}_{f}\}$ and $\y^*=\{y^*_{f}\}$ are optimal solutions to \LP-\eqref{obj-1} and \LP-\eqref{obj-2} respectively; (2) $(\alp, \beta)$ are two given parameters with $0\le \alp, \beta \le 1, \alp+\beta \le 1$; (3) $B_u=1$ for all $u$ from Lemma~\ref{lem:unit}; (4) $\x^v=\{ x^*_f/r_v, f \in E_v\}$ and $\y^v=\{ y^*_f/r_v, f\in E_v\}$, which are scaled solutions from $\x^*$ and $\y^*$ respectively restricted on $E_v$. Note that from Constraints~\eqref{cons:pat-v}, \eqref{cons:mat-v} and \eqref{cons:edge}, we have that $\x^v$ and $\y^v$ are two fractional solutions on $E_v$ and each has a total sum at most $\Del_v$.

\xhdr{The first algorithm $\lpalg(\alp,\beta)$}. Let an online vertex $v$ arrive at $t$. Our job is to probe at most $\Del_v$ edges in $E_{v}$ until $v$ is matched. Let $\z$ be a given fractional solution on $E_v$. $\lpalg(\alp,\beta)$ invokes the following procedures (denoted by $\sr(\z)$) as a subroutine during each online round: it first selects a set $\cS_v$ of at most $\Del_v$ edges from $E_v$ in a random way guided by a given fractional vector $\z$ on $E_v$ and then follows a random order to process all edges in $\cS_v$ one by one. The details of $\sr$ are stated in Algorithm~\ref{alg:sr}.
\vspace{-3mm}
\begin{algorithm}[ht!]
\DontPrintSemicolon
Apply GKPS to the fractional vector $\z$ and let $\Z$ be the random binary vector output. \;
Choose a random permutation $\pi$ over $E_v$. \;
 Follow the order $\pi$ to process each $f=(u,v) \in E_v$ until $v$ is matched: \;
 \Indp \If { $Z_f=1$ and $u$ is available}{ Probe the edge $f$ (\ie assign $v$ to $u$).}
 \Else{Skip to the next one.}
\caption{Sub-Routine $\sr(\z)$: Dependent rounding combined with random permutation}
\label{alg:sr}
\end{algorithm}
\vspace{-2mm}

Based on  $\sr$, the main idea of $\lpalg(\alp,\beta)$ is as simple as follows: each round when an online vertex $v$ arrives, it invokes $\sr({\x}^v)$ and $\sr({\y}^v)$ with probabilities $\alp$ and $\beta$ respectively. Recall that ${\x}^v$ and ${\y}^v$ are the scaled optimal solutions to \LP-\eqref{obj-1} and \LP-\eqref{obj-2} restricted to $E_v$, each has a total sum at most $\Del_v$. Thus, when we run $\sr({\x}^v)$ or $\sr({\y}^v)$ after $v$ arrives online, we will probe at most $\Del_v$ edges incident to $v$ since the final rounded binary vector has at most $\Del_v$ ones due to Property of Degree Preservation in the dependent rounding.   The details of $\lpalg(\alp,\beta)$ are as follows.
\vspace{-3mm}
\begin{algorithm}[h!]
\DontPrintSemicolon
Let $v$ arrive at time $t$. \;
With probability $\alp$, run  $\sr({\x}^v)$. \;
With probability $\beta$, run  $\sr({\y}^v)$. \;
 With probability $1-\alp-\beta$, reject $v$.
\caption{An LP-based warm-up algorithm: $\lpalg(\alp, \beta)$}
\label{alg:lp-alg}
\end{algorithm}
\vspace{-2mm}

We conduct an edge-by-edge analysis. It would suffice to show that each $f$ is probed with probability at least $x_f^* \cdot \alp \cdot (1-1/e)/2$ and $y_f^* \cdot \beta \cdot  (1-1/e)/2$ in $\lpalg(\alp, \beta)$. Then by linearity of expectation, we can get Theorem \ref{thm:main-1}. Focus  on a given $u$ and a time $t \in [T]$. Let $\SF_{u,t}$ be the event that $u$ is \emph{available} at (the beginning of) $t$.

\begin{lemma}\label{lem:1}
For any given $u$ and $t \in [T]$, we have $\textstyle \Pr[\SF_{u,t}] \ge \textstyle \Big(1-\frac{1}{T}\Big)^{t-1}$.
\end{lemma}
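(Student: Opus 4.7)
The plan is to prove the lemma by induction on $t$, reducing it to the claim that in any single round $s$, conditional on the event $\SF_{u,s}$ that $u$ is still available at the start of $s$, the probability $u$ becomes matched during round $s$ is at most $1/T$. Given this one-round bound, iterating over $s=1,\dots,t-1$ with the trivial base case $\Pr[\SF_{u,1}]=1$ yields
\[
\Pr[\SF_{u,t}] \;=\; \prod_{s=1}^{t-1}\Pr\bigl[u \text{ not matched in round }s \bigm| \SF_{u,s}\bigr] \;\ge\; \Bigl(1-\tfrac{1}{T}\Bigr)^{t-1},
\]
which is the target inequality.

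For the per-round bound I would fix $s$ and decompose ``$u$ matched in round $s$'' according to which request $v$ arrives and which incident edge $f=(u,v)\in E_u$ effects the match. For a specific $f=(u,v)$, a successful match through $f$ requires: (i) $v$ arrives, with probability $q_v = r_v/T$ (independent of past history by KIID); (ii) the algorithm invokes $\sr(\x^v)$ (probability $\alp$) or $\sr(\y^v)$ (probability $\beta$); (iii) the GKPS output has $Z_f=1$, which by the marginal-distribution property of GKPS happens with probability $x^*_f/r_v$ or $y^*_f/r_v$ respectively; (iv) $f$ actually gets probed under the random permutation, which is crudely upper-bounded by $1$; and (v) $f$ exists, contributing $p_f$. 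Multiplying and summing over $f \in E_u$ gives
\[
\Pr\bigl[u \text{ matched in round }s \bigm|\SF_{u,s}\bigr] \;\le\; \sum_{f=(u,v)\in E_u}\frac{1}{T}\bigl(\alp\, x^*_f p_f + \beta\, y^*_f p_f\bigr).
\]

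To finish, I would apply Constraint~\eqref{cons:match-u} to each of the optimal solutions $\x^*$ and $\y^*$ under the unit-capacity assumption $B_u=1$ (Lemma~\ref{lem:unit}): this bounds $\sum_{f\in E_u} x^*_f p_f \le 1$ and $\sum_{f\in E_u} y^*_f p_f \le 1$, so the right-hand side above is at most $(\alp+\beta)/T\le 1/T$ by the hypothesis $\alp+\beta\le 1$, completing the induction.

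The main subtlety I expect is step~(iv): the true probe probability for $f$ depends on the random permutation of $E_v$ as well as on whether some edge earlier in the permutation already succeeded, so writing down the exact probe probability would entangle it with the edge-existence randomness of unrelated edges. The clean escape is to observe that ``probed'' implies ``$Z_f=1$'', so dropping step~(iv) to the trivial upper bound $1$ and using only the GKPS marginal is both sound and tight enough. The other mild point worth checking is that conditioning on $\SF_{u,s}$, which is determined by the history strictly before round $s$, does not interfere with the fresh arrival of $v$ or the internal randomness of $\sr$ in round $s$; this is exactly where KIID and the per-round independence of GKPS/permutation/existence are used.
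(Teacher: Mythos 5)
Your proposal is correct and follows essentially the same route as the paper: bound the per-round matching probability of $u$ (conditioned on availability) by $\frac{1}{T}\sum_{f\in E_u}(\alp x_f^*p_f+\beta y_f^*p_f)\le 1/T$ using the GKPS marginal property, Constraint~\eqref{cons:match-u} with $B_u=1$, and $\alp+\beta\le 1$, then take the product over rounds. Your explicit handling of the ``probed implies $Z_f=1$'' upper bound and of the conditioning on past history is, if anything, slightly more careful than the paper's write-up, which states some of these bounds as equalities.
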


\begin{proof}
Recall that we assume w.l.o.g. that each $B_u$=1 due to Lemma~\ref{lem:unit}. For each given $\ell<t$ and $f=(u,v) \in E_u$, let $X_{f, \ell}$ indicate if $v$ arrives at time $t$; $Y_{f,\ell}$ indicate if $f$ is probed during round $\ell$; $Z_{f,\ell}$ indicate if $f$ is present when probed. Note that in each subroutine of $\sr({\x}^v)$ and $\sr({\y}^v)$ after $v$ arrives, $f$ will be probed only when the final rounded vector has the entry one on $f$. Thus we claim that $\E[Y_{f,\ell}] \le  \alp x_f^*/r_v+\beta y_f^*/r_v$ due to Property of Marginal Distribution in dependent rounding and statements of $\lpalg(\alp, \beta)$. Thus,
\vspace{-3mm}
 \begin{align*}
&\textstyle \small  \Pr[\SF_{u,t}]= \textstyle \prod_{\ell<t} \Pr\Big[\sum_{f \in E_u} X_{f,\ell} Y_{f,\ell} Z_{f,\ell}=0\Big]\\
&\textstyle = \textstyle \small \prod_{\ell<t} \Big( 1-\Pr\Big[ \sum_{f \in E_u}X_{f,\ell} Y_{f,\ell} Z_{f,\ell} \ge 1 \Big]\Big) \\
 &\textstyle = \textstyle \prod_{\ell<t} \Big(  1-\sum_{f \in E_u} \frac{r_v}{T} \Big(\alp \frac{x_f^*}{r_v}+\beta \frac{y_f^*}{r_v} \Big) p_f \Big)\\
 &\textstyle = \textstyle \prod_{\ell<t} \Big(  1-\frac{1}{T}\sum_{f \in E_u}  \Big(\alp x_f^* p_f+\beta y_f^* p_f \Big)\Big)  \\
 &\textstyle \ge  \Big(1-\frac{1}{T}\Big)^{t-1}
 \end{align*}
\vspace{-3mm}
 \end{proof}

Now assume $\SF_{u,t}$ occurs (\ie $u$ is available at $t$). Consider a given $f=(u,v)$ and let $\bo_{f,t}$ indicate $f$ is probed during round $t$ in $\lpalg(\alp, \beta)$. Notice that $\bo_{f,t}$ occurs if (1) $v$ arrives at time $t$ and (2) $f$ is probed either in $\sr({\x}^v)$ or $\sr({\y}^v)$.

\begin{lemma}\label{lem:2}
$\Pr[\bo_{f,t} |\SF_{u,t}] \ge \frac{\alp x_f^*}{2T}, \Pr[\bo_{f,t} |\SF_{u,t}] \ge \frac{\beta y_f^*}{2T}$.
\end{lemma}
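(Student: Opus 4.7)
The plan is to derive the two inequalities by an edge-by-edge accounting along the two random branches of $\lpalg(\alp,\beta)$. The bounds are symmetric (swap $\x^*$ with $\y^*$ and $\sr(\x^v)$ with $\sr(\y^v)$) and both $\x^*,\y^*$ satisfy the constraints of \LP-\eqref{obj-1} and \LP-\eqref{obj-2}, so I will focus on the first bound $\Pr[\bo_{f,t}\mid \SF_{u,t}]\ge \alp x_f^*/(2T)$ and treat only the contribution from the $\sr(\x^v)$ branch; the second follows identically.

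Write $f=(u,v)$. Restricted to the $\sr(\x^v)$ branch, $\bo_{f,t}$ is the intersection of (i) $v$ arrives at round $t$, (ii) the algorithm chooses $\sr(\x^v)$ over $\sr(\y^v)$ or rejection, and (iii) inside the subroutine $f$ actually gets probed. Events (i) and (ii) use fresh randomness independent of $\SF_{u,t}$, contributing factors $r_v/T$ and $\alp$. For (iii), the Marginal Distribution property of GKPS gives $\Pr[Z_f=1]=x_f^*/r_v$. Conditional on $Z_f=1$, the main step is to show that $f$ is reached and probed with probability at least $1/2$.

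Two obstructions could prevent the probing of $f$ once $Z_f=1$: $u$ could fail to be available when the subroutine reaches $f$, or $v$ could already be matched by an earlier edge in the random permutation $\pi$. The first cannot occur, because $\SF_{u,t}$ makes $u$ available at the start of round $t$, and within $E_v$ the edge $f$ is the only edge incident to $u$ (edges are distinct pairs), so no earlier probe in this OPP can consume $u$. The second obstruction requires some $f'\neq f$ to satisfy $\pi(f')<\pi(f)$, $Z_{f'}=1$, and $f'$ to be present (independent coin of probability $p_{f'}$). A union bound, combined with $\Pr[\pi(f')<\pi(f)]=1/2$ (uniform permutation) and the Negative Correlation bound $\Pr[Z_{f'}=1\mid Z_f=1]\le x_{f'}^*/r_v$, yields
\[
\Pr[v\text{ matched before }f \mid Z_f=1,\SF_{u,t}] \le \tfrac{1}{2}\sum_{f'\in E_v\setminus\{f\}}\tfrac{x_{f'}^*}{r_v}p_{f'}\le \tfrac{1}{2},
\]
where the last step uses constraint \eqref{cons:mat-v}. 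Multiplying the three factors gives $\Pr[\bo_{f,t}\mid\SF_{u,t}]\ge (r_v/T)\cdot \alp\cdot (x_f^*/r_v)\cdot(1/2)=\alp x_f^*/(2T)$.

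I expect the main obstacle to be bookkeeping the conditional independence: one must verify that $\SF_{u,t}$, which depends only on rounds before $t$, is independent of every piece of fresh randomness drawn at round $t$ (the arrival sample, the branch choice, the GKPS output $\Z$, the permutation $\pi$, and the per-edge existence coins), so that the marginal and negative correlation inequalities transfer intact to the conditional distribution given $\SF_{u,t}$. Everything else reduces to a clean union bound that plugs directly into LP constraint \eqref{cons:mat-v}.
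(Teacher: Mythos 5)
Your proof is correct and follows essentially the same route as the paper's: condition on $\SF_{u,t}$, factor out the arrival and branch-selection probabilities $\frac{\alp r_v}{T}$, use the GKPS marginal distribution for $\Pr[Z_f=1]$, and bound the probability that an earlier edge in $\pi$ matches $v$ by a union bound (the paper phrases this as Markov's inequality on a sum of indicators, which is the same estimate) combined with negative correlation, $\E[Y_{f'}]=1/2$, and Constraint~\eqref{cons:mat-v}. The only cosmetic difference is that you explicitly note $u$ cannot be consumed by an earlier probe within the same round, a detail the paper leaves implicit.
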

\begin{proof}
We focus on the first inequality and try to show that $f$ is probed at $t$ in $\sr({\x}^v)$ with probability at least  $\frac{\alp x_f^*}{2T}$ (including the probability of its online arrival).  Observe that events $v$ arrives at time $t$ and $\lpalg(\alp, \beta)$ runs the subroutine $\x^v$ both happen with probability $\frac{\alp r_v}{T}$. Let $\X^v$ be the rounded binary vector from ${\x}^v$ and we use $X^v_f$ to denote its entry on $f$. Let $E_{v,\neg f}$ be the set of edges in $E_v$ excluding $f=(u,v)$. For each $f' \in E_{v,\neg f}$, let $Y_{f'}$ indicate if $f'$ falls before $f$ in the random order $\pi$ and $Z_{f'}$ indicate if $f'$ is present when probed. Thus we have
\begin{align}
&\textstyle \small \Pr[\bo_{f,t}|\SF_{u,t}] \\
&\textstyle \small  \ge \frac{\alp r_v}{T} \Pr[X^v_f=1] \Pr\Big[\sum_{f'  \in E_{v,\neg f}} X^v_{f'} Y_{f'}Z_{f'} =0 | X^v_f=1\Big]  \label{ineq:wu-1}\\
&\textstyle = \frac{\alp r_v}{T}\frac{x_f^*}{r_v} \Big(1-\Pr\Big[\sum_{f'  \in E_{v, \neg f}} X^v_{f'} Y_{f'}Z_{f'} \ge 1 | X^v_f=1\Big] \Big) \label{ineq:wu-2} \\
&\textstyle \ge \frac{\alp x_f^*}{T} \Big(1-\E\Big[\sum_{f'  \in E_{v, \neg f}} X^v_{f'} Y_{f'}Z_{f'} | X^v_f=1\Big] \Big)  \label{ineq:wu-3} \\
&\textstyle \ge \frac{\alp x_f^*}{T}  \Big(1-\sum_{f'  \in E_{v, \neg f}} \E\Big[X^v_{f'} Y_{f'}Z_{f'}  | X^v_f=1\Big] \Big)  \label{ineq:wu-4} \\
&\textstyle \ge \frac{\alp x_f^*}{T}  \Big(1-\sum_{f'  \in E_{v, \neg f}}  \frac{x^*_{f'}}{r_v} \frac{p_f}{2} \Big) \label{ineq:wu-5} \\
&\textstyle \ge \frac{\alp x_f^*}{T} \frac{1}{2}. \label{ineq:wu-6}
\end{align}
Inequality~\eqref{ineq:wu-3} follows from Markov's inequality.  Inequality~\eqref{ineq:wu-5} is due to these two observations: (1) $\E[X^v_{f'}| X_f^v=1] \le x_f^*/r_v$ due to negative correlation in dependent rounding and (2)~$\E[Y_{f'}]=1/2$, $\E[Z_{f'}]=p_f$. Inequality~\eqref{ineq:wu-6} follows from  the fact $\sum_{f' \in E_v} x^*_{f'} p_{f'} \le r_v$ due to Constraint~\eqref{cons:mat-v}. Following a similar analysis, we can prove the second part.
\end{proof}

Now we have all ingredients to prove the main Theorem~\ref{thm:main-1}.

\begin{proof}
Consider a given $f=(u,v) \in E$, let $\kap_f^{\x}$ and $\kap_f^{\y}$ be the expected number of \emph{successful} probes of $f$ in $\sr(\x^v)$ and $\sr(\y^v)$ respectively. Here a probe of $f=(v,u)$ is successful iff $u$ is available when we assign $v$ to $u$ (but no necessarily means $f$ is present). 
\begin{align*}  
 \textstyle \kappa_f^{\x}  &\ge \textstyle \sum_{t=1}^T  \Pr[\SF_{u,t}] \Pr[\bo_{f,t}|\SF_{u,t}] \\
 & \textstyle \ge    \sum_{t=1}^T  \Big(1-\frac{1}{T}\Big)^{t-1} \frac{\alp x_f^*}{2T} \sim \frac{\alp x^*_f (1-1/e)}{2}
   \end{align*}
 The last term is obtained after taking $T \rightarrow \infty$. Similarly, we can show that $\kappa_f^\y \ge \frac{\beta y^*_f (1-1/e)}{2}$. 
 
Let $\pro(\alp, \beta)$ be the expected total profit obtained by $\lpalg(\alp, \beta)$. By linearity of expectation, we have $\pro(\alp, \beta) \ge \frac{(1-1/e) \alp}{2} \sum_{f \in E} x^*_f p_e w_e $. From Lemma~\ref{lem:LP}, we know that the expected profit in offline optimal is upper bounded by $\sum_{f \in E} x_f^* p_e w_e$. Thus we claim that $\lpalg(\alp, \beta)$ achieves a ratio at least $\alp(1-1/e)/2$ on the profit. Similarly, we can argue that $\lpalg(\alp, \beta)$ achieves a ratio at least $\beta (1-1/e)/2$ on the fairness.  
\end{proof}

\xhdr{The second algorithm $\alg(\alp,\beta)$}. Inspired by~\cite{BSSX17}, we can improve at least the theoretical performance of $\lpalg$ with attenuation techniques applied to edges and (offline) vertices. The motivation behind is very simple. Note that edges in $E_v$ are competing for each other since we have to stop probing whenever $v$ is matched. Thus, attenuating those edges which win the higher chance of probing over others can potentially boost the worst-case performance.  

Let $\{\gam_t, \mu_t |t \in [T]\}$ be such a series that is defined as 
$\gam_1=1, \mu_t=1-\gam_t/2, \gam_{t+1}=\gam_t(1-\mu_t/T)$.

Let $E_{v,t}$ be the set of available edges $f=(u,v) \in E_v$ at time $t$ (\ie $u$ is available at $t$). The formal description of $\alg$  is stated in Algorithm~\ref{alg:att}. We defer the proofs of Theorems~\ref{thm:main-2} to the Appendix.

\begin{algorithm}[ht!]
\DontPrintSemicolon
\For{$t=1,2,\ldots, T$}{ 
Apply vertex-attenuation such that each $u \in U$ is available at $t$ with probability equal to $\gam_t$. \label{alg:att-3}\;
Let $v$ arrive at time $t$. \;
With probability $\alp$, \;
\Indp Run  $\sr({\x}^v)$. Apply edge-attenuation such that each edge $f\in E_{v,t}$ is probed in $\sr({\x}^v)$ with probability equal to $\mu_t x_f^*/r_v$.\label{alg:att-1}\; 
\Indm With probability $\beta$, \;
\Indp Run  $\sr({\y}^v)$. Apply edge-attenuation such that each edge $f\in E_{v,t}$ is probed in $\sr({\y}^v)$ with probability equal to $\mu_t y_f^*/r_v$. \label{alg:att-2}\;
\Indm  With probability $1-\alp-\beta$, reject $v$.}
\caption{An LP-based algorithm after attenuation: $\alg(\alp, \beta)$}
\label{alg:att}
\end{algorithm}

\section{Hardness Results}

We prove Theorem~\ref{thm:hard} in this section. Consider the below example.  

\begin{example}\label{exam:hard}
Consider a graph which consists of $n$ identical units, each unit $i \in [n]$ is a star graph which includes the center of $v_i$ and two other neighbors $u_i^a$ and $u_i^b$. Set $p_{i,a}=1$ and $p_{i,b}=\ep$ where we use $\{i,a\}$ ($\{i,b\}$) to index the edges $(u_i^a,v_i)$ and $(u_i^b, v_i)$ respectively. Assume that (1) unit edge weight on all edges; (2) $T=n$ and unit arrival rate on all $v_i$ (\ie all $r_v=1$); (3) unit matching capacity on all $u$ (\ie all $B_u$=1); and (4) unit patience on all $v$ (\ie all $\Del_v=1$).  

Let $\OPTP$ and $\OPTF$ be the optimal LP values of \LP-\eqref{obj-1} and \LP-\eqref{obj-2} on the above example respectively. We can verify that: (1) $\OPTP=n$, where there is a unique optimal solution $x^*_{i,a}=1$ and $x^*_{i,b}=0$ for all $i \in [n]$; (2) $\OPTF=\ep/(1+\ep)$, where there is a unique optimal solution $y^*_{i,a}=\frac{\ep}{1+\ep}$ and $y^*_{i,b}=\frac{1}{1+\ep}$ for all $i \in [n]$. 
\end{example}
Now based on Example~\ref{exam:hard}, we prove the below lemma.

\begin{lemma}\label{lem:hard-cr}
Consider Example~\ref{exam:hard} and assume  \LP-\eqref{obj-1} and \LP-\eqref{obj-2} as benchmarks. We have (1) no algorithm can achieve a competitive ratio larger than $1-1/e$ on the profit; (2) no algorithm can achieve competitive ratios on the profit and fairness with a sum larger than $1$. 
\end{lemma}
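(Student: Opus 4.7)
The plan is to prove both parts directly on Example~\ref{exam:hard}, by tracking a small set of per-unit sufficient statistics for an arbitrary online algorithm $\ALG$. For each star-unit $i\in[n]$ let $a_i=\Pr[u_i^a \text{ is matched}]$, $b_i=\Pr[u_i^b \text{ is matched}]$, let $s_i$ be the expected number of arrivals of $v_i$ that $\ALG$ assigns to $u_i^a$, and let $t_i$ be the analogous quantity for $u_i^b$. The backbone of the argument will be three linear inequalities: $a_i\le s_i$ (a match requires an assignment and $p_{i,a}=1$), $b_i\le \ep\, t_i$ (each assignment to $u_i^b$ succeeds with probability $\ep$, via Bernoulli's inequality conditional on the number of assignments), and $s_i+t_i\le \E[N_i]=r_{v_i}=1$ (assignments cannot outnumber arrivals, and $r_{v_i}=1$ in the example). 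Together these give the crucial per-unit constraint $a_i + b_i/\ep \le 1$.

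For part~(1), I would observe that a match at $u_i^a$ requires at least one arrival of $v_i$ across the $T=n$ rounds, so $a_i\le 1-(1-1/n)^n$, while $b_i\le \ep\, t_i\le \ep$. Summing over $i$ yields $\ALGP\le n\bigl(1-(1-1/n)^n\bigr)+n\ep$; since $\OPTP=n$, letting $n\to\infty$ and then $\ep\to 0$ forces the competitive ratio on profit to be at most $1-1/e$.

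For part~(2), suppose $\ALG$ attains competitive ratios $(\alp,\beta)$ simultaneously. The profit lower bound reads $\sum_i(a_i+b_i)\ge \alp\cdot\OPTP=\alp n$; combined with the per-unit $a_i\le 1-b_i/\ep$ this yields the upper bound $\sum_i b_i \le (1-\alp)\,n\,\ep/(1-\ep)$. The fairness lower bound applied at the offline vertex $u_i^b$ (which attains the minimum defining $\OPTF$) forces $b_i \ge \beta\cdot \ep/(1+\ep)$ for every $i$, hence $\sum_i b_i \ge n\beta\cdot \ep/(1+\ep)$. Reconciling the two bounds on $\sum_i b_i$ and letting $\ep\to 0$ delivers $\alp+\beta\le 1$.

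The main obstacle I anticipate is justifying the three backbone inequalities rigorously for an \emph{arbitrary} adaptive algorithm that may branch on the entire history of arrivals and realized edge existences. The first two should reduce to linearity of expectation and Bernoulli's inequality conditional on the assignment count, and the third to the fact that $v_i$ must arrive before any assignment to $u_i^a$ or $u_i^b$ is made; but writing this out cleanly requires being explicit about the filtration on which $\ALG$'s randomized decisions are measurable. Once that bookkeeping is in place, the remainder is routine algebra.
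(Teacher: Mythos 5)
Your proposal is correct and follows essentially the same route as the paper: both arguments track, per star-unit, the expected number of assignments to $u_i^a$ and $u_i^b$, use the budget $s_i+t_i\le r_{v_i}\Delta_{v_i}=1$ together with $a_i\le 1-(1-1/n)^n$ and $b_i\le \ep t_i$ to get part (1), and exploit the fact that fairness is a minimum over all offline vertices (the paper via $\min_i \le$ average, you via the pointwise bound $b_i\ge\beta\ep/(1+\ep)$) to force $\alp+\beta\le 1+O(\ep)$ for part (2). The measurability bookkeeping you worry about is handled in the paper exactly as you anticipate, by linearity of expectation over probes, so no further work is needed.
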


\begin{proof}
Consider a given online algorithm  $\ALG$, in which the expected number of probes for $(u_i^a, v_i)$ and $(u_i^b, v_i)$ are $\alp_i$ and $\beta_i$ for each $i \in [n]$, respectively. Let $\ALGP$ and $\ALGF$ be the profit and fairness achieved by $\ALG$. We have that 
$\ALGP=\sum_{i \in [n]} (\alp_i+\beta_i \ep), \ALGP=\min_{i \in [n]} \Big( \alp_i, \beta_i \ep \Big)$. Set $\alp\doteq \sum_{i \in [n]} \alp_i$ and $\beta \doteq \sum_{i \in [n]} \beta_i$. Note that (1) $\alp+\beta \le n$, and (2) $\alp \le (1-1/e)n$. The latter inequality is due to each $\alp_i \le 1-1/e$. Thus, the sum of competitive ratios on profit and fairness should be 
\begin{align*}
&\small \frac{\ALGP}{\OPTP}+\frac{\ALGF}{\OPTF}=\frac{\sum_{i \in [n]} \alp_i+\beta_i \ep }{n}+\frac{\min_{i \in [n]} \Big( \alp_i, \beta_i \ep \Big) }{\ep/(1+\ep)} \\
&\small \le \frac{\alp+\ep \beta}{n}+\frac{\beta (1+\ep)}{n}=\frac{\alp+\beta+2\ep \beta}{n} \le 1+2 \ep.
\end{align*}
As for profit, we see that $\frac{\ALGP}{\OPTP}=\frac{\alp+\ep \beta}{n} \le 1-1/e+\ep$.
\end{proof}
 
Based on the example presented in Lemma 5 of Section 3.1 of~\cite{fata2019multi}, we can get a stronger version of statement (2) in Lemma~\ref{lem:hard-cr}, which states that no online algorithm can get an online ratio better than $0.51$ for either the profit or fairness based on  \LP-\eqref{obj-1} and \LP-\eqref{obj-2}. Summarizing all analysis we prove Theorem~\ref{thm:hard}.

\begin{figure*}[h!]
  \centering
  \subfigure[$B=10$]{
    \label{fig:pfva_10}
    \includegraphics[width=0.22\textwidth]{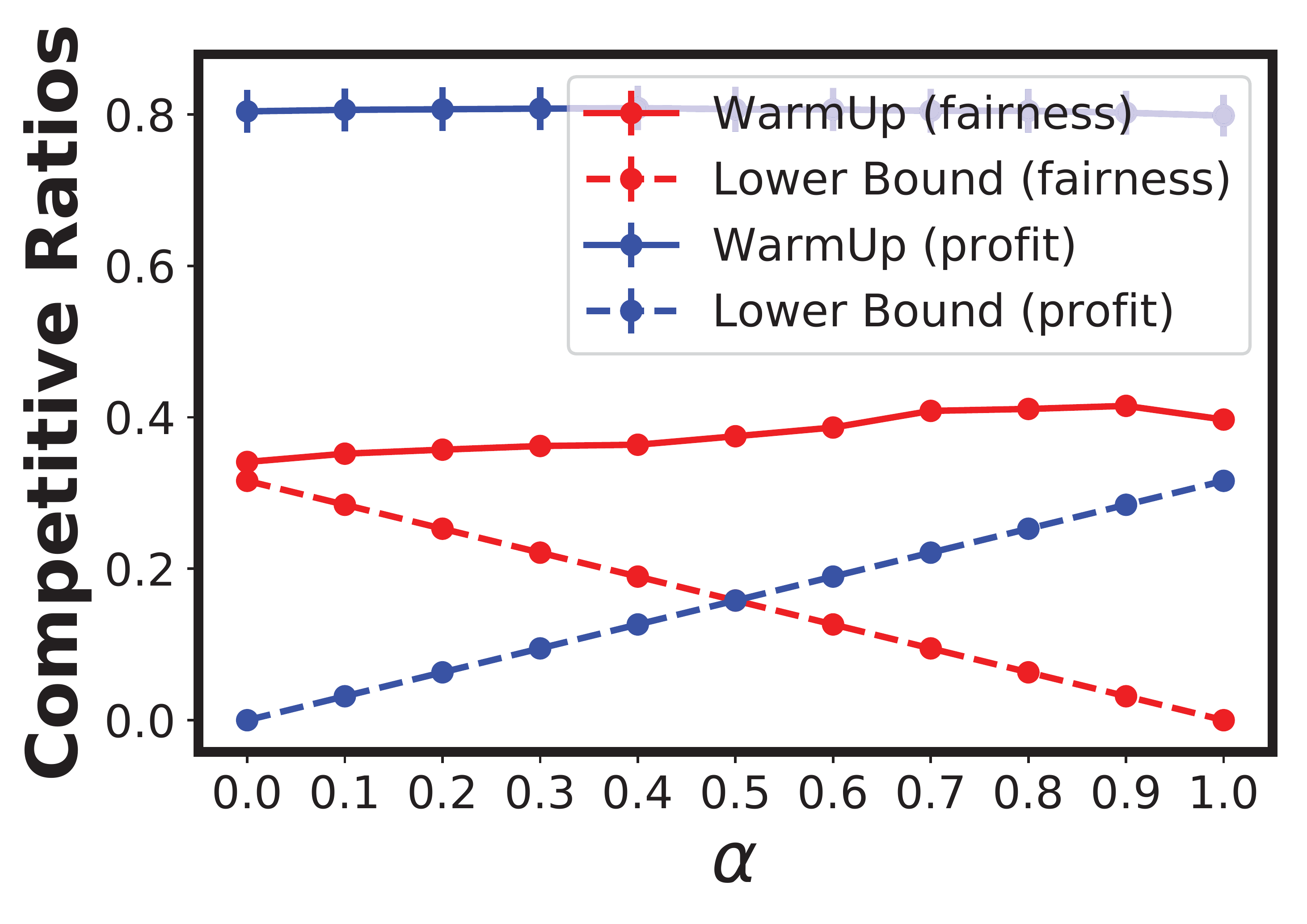}
  }
  \subfigure[$B=15$]{
    \label{fig:pfva_15}
    \includegraphics[width=0.22\textwidth]{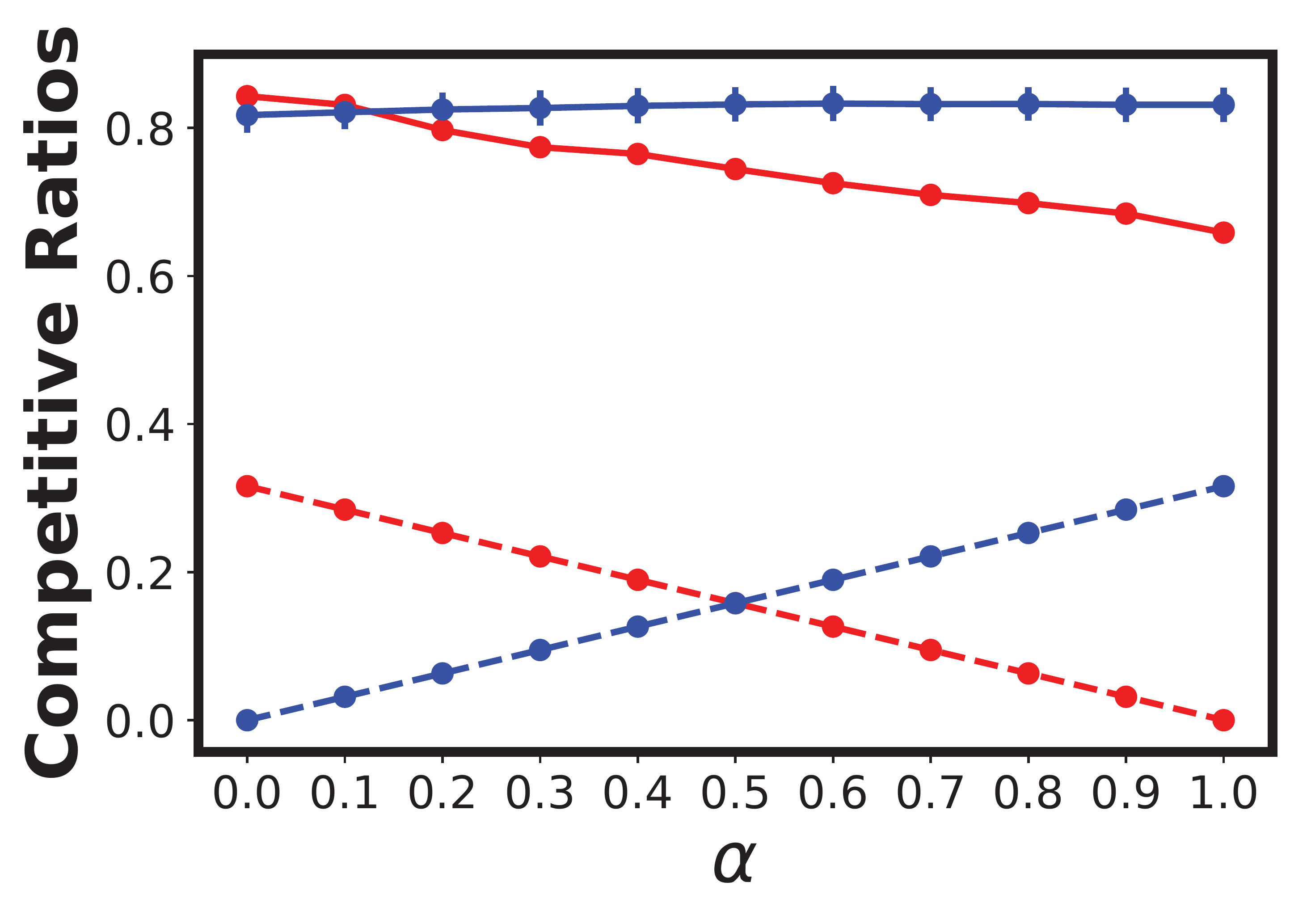}
  }
  \subfigure[$B=20$]{
    \label{fig:pfva_20}
    \includegraphics[width=0.22\textwidth]{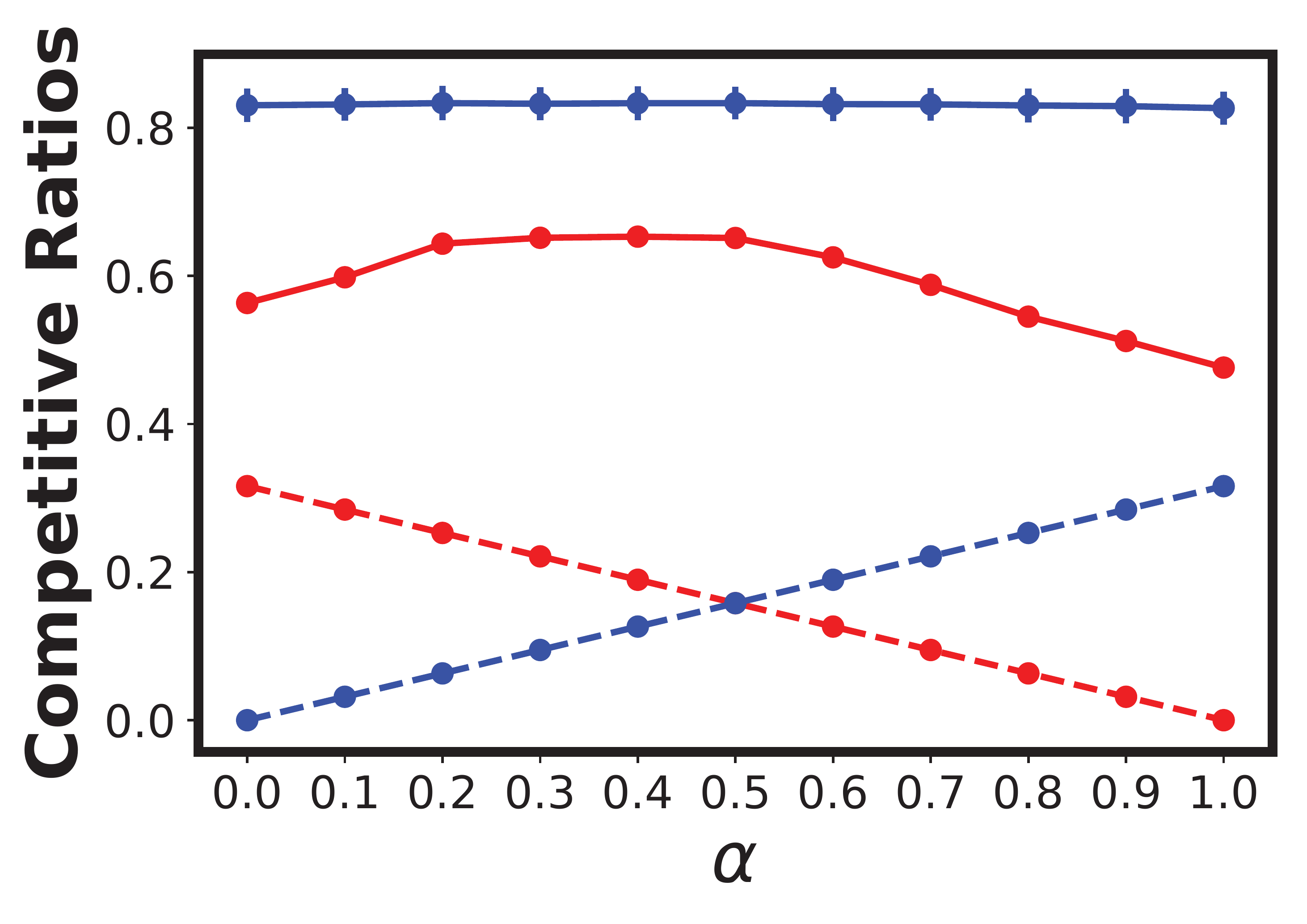}
  }
  \subfigure[$B=25$]{
    \label{fig:pfva_25}
    \includegraphics[width=0.22\textwidth]{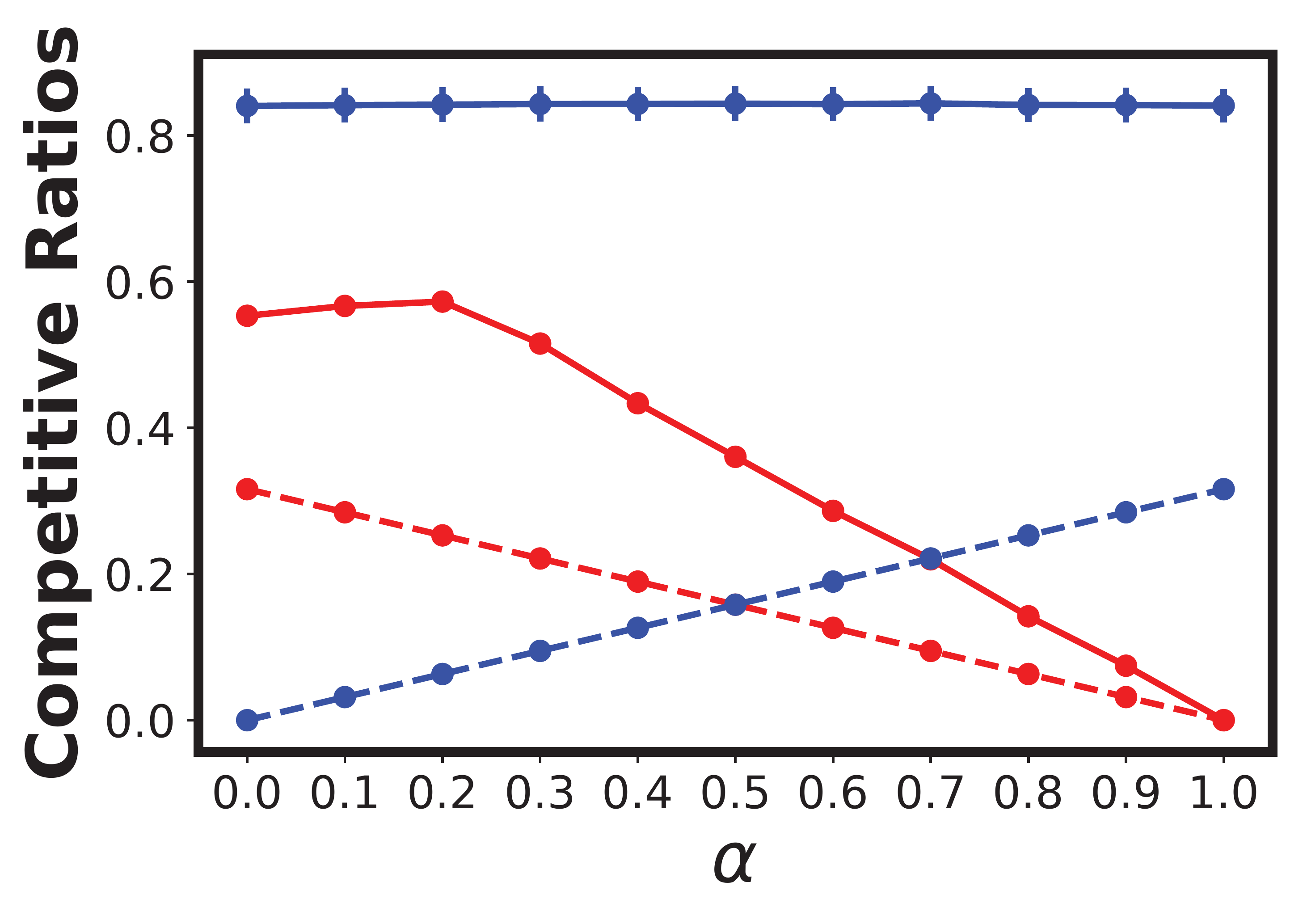}
  }
  \caption{Competitive ratios for profit and fairness with different values of $\alpha$ and $\beta$ with $\alpha+\beta=1$.}
  \label{fig:pfva_u}
\end{figure*}

\begin{figure*}[h!]
  \centering
  \vspace{-3mm}
  \subfigure[$B=10$]{
    \label{fig:fps_10}
    \includegraphics[width=0.22\textwidth]{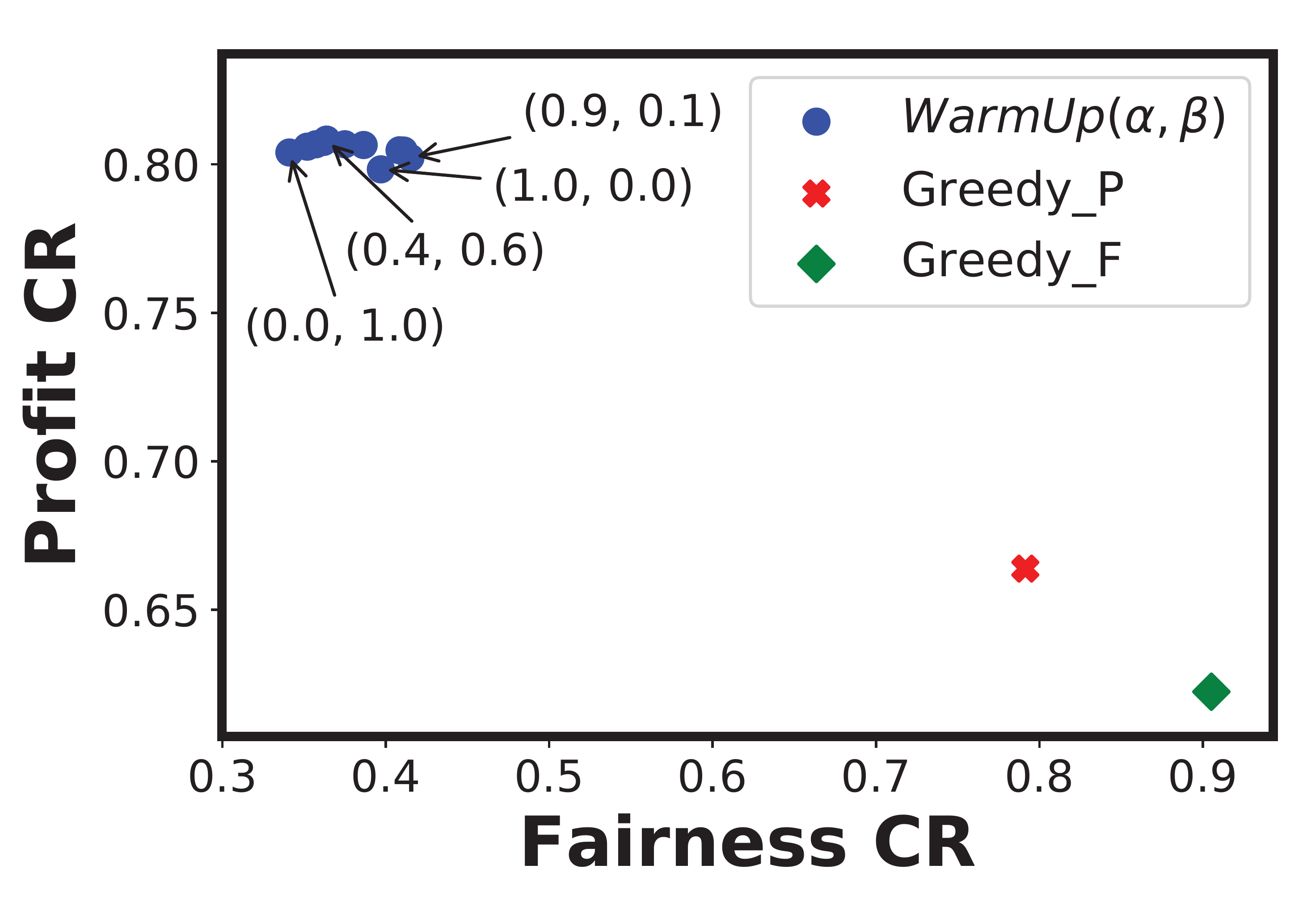}
  }
  \subfigure[$B=15$]{
    \label{fig:fps_15}
    \includegraphics[width=0.22\textwidth]{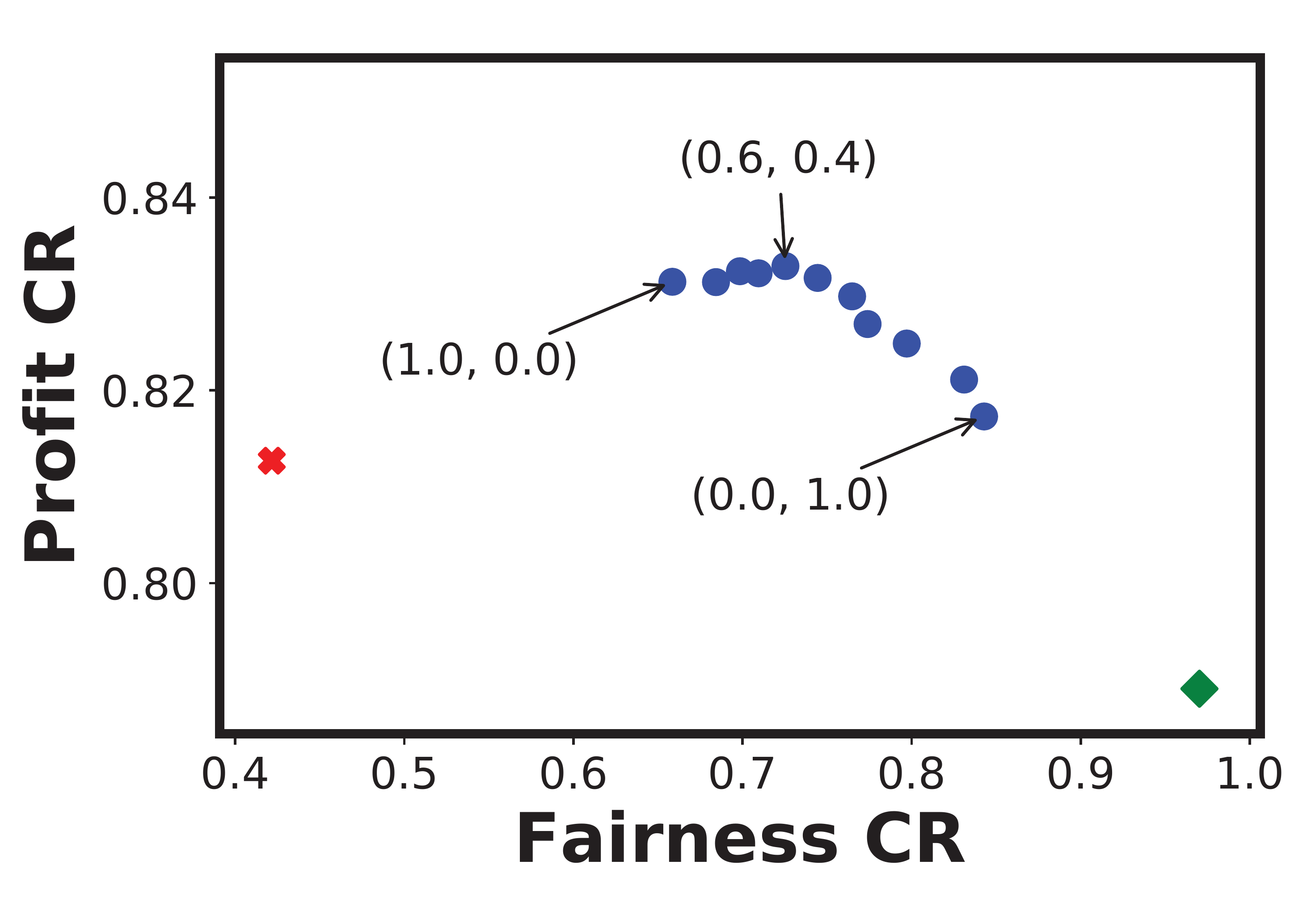}
  }
  \subfigure[$B=20$]{
    \label{fig:fps_20}
    \includegraphics[width=0.22\textwidth]{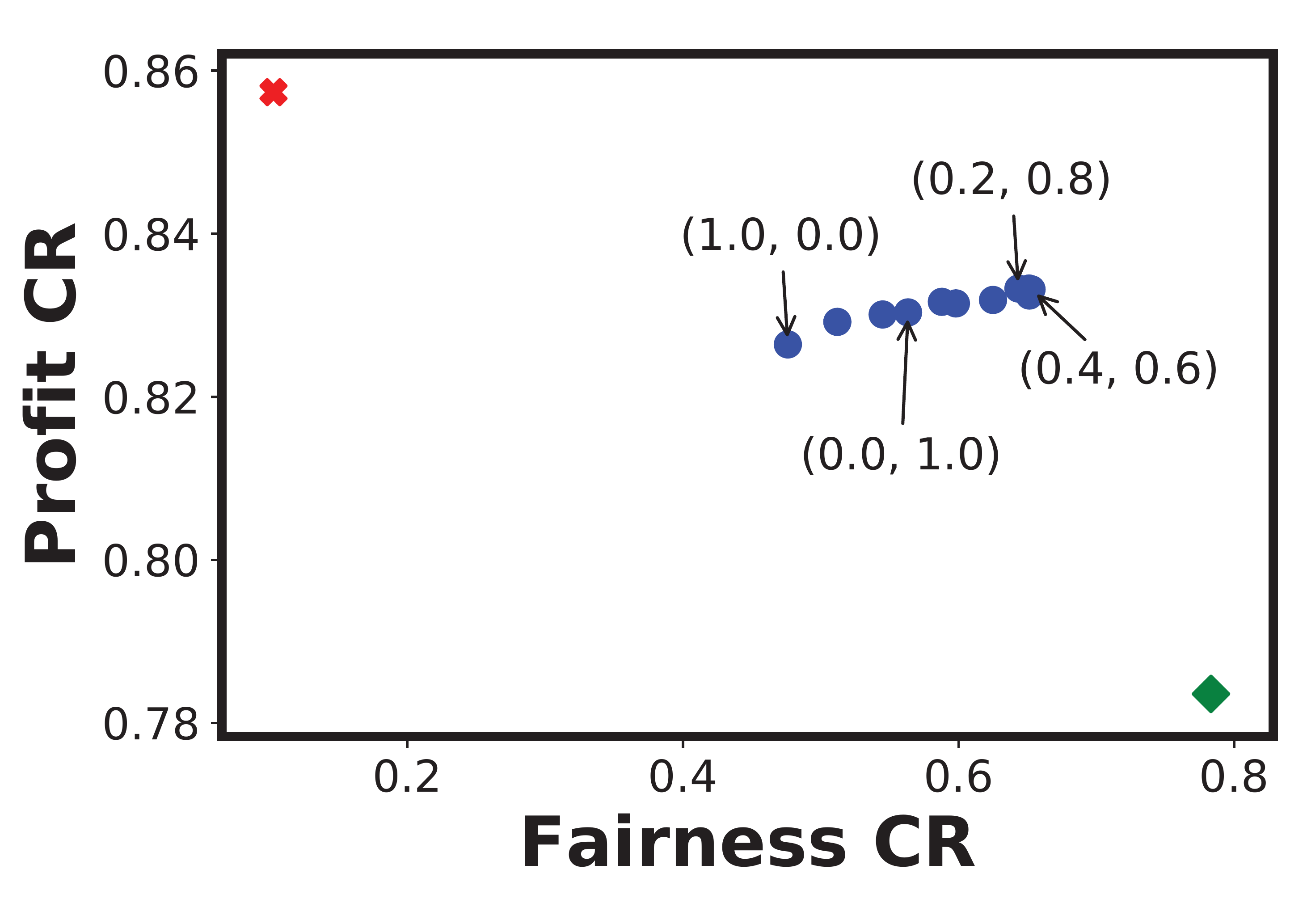}
  }
  \subfigure[$B=25$]{
    \label{fig:fps_25}
    \includegraphics[width=0.22\textwidth]{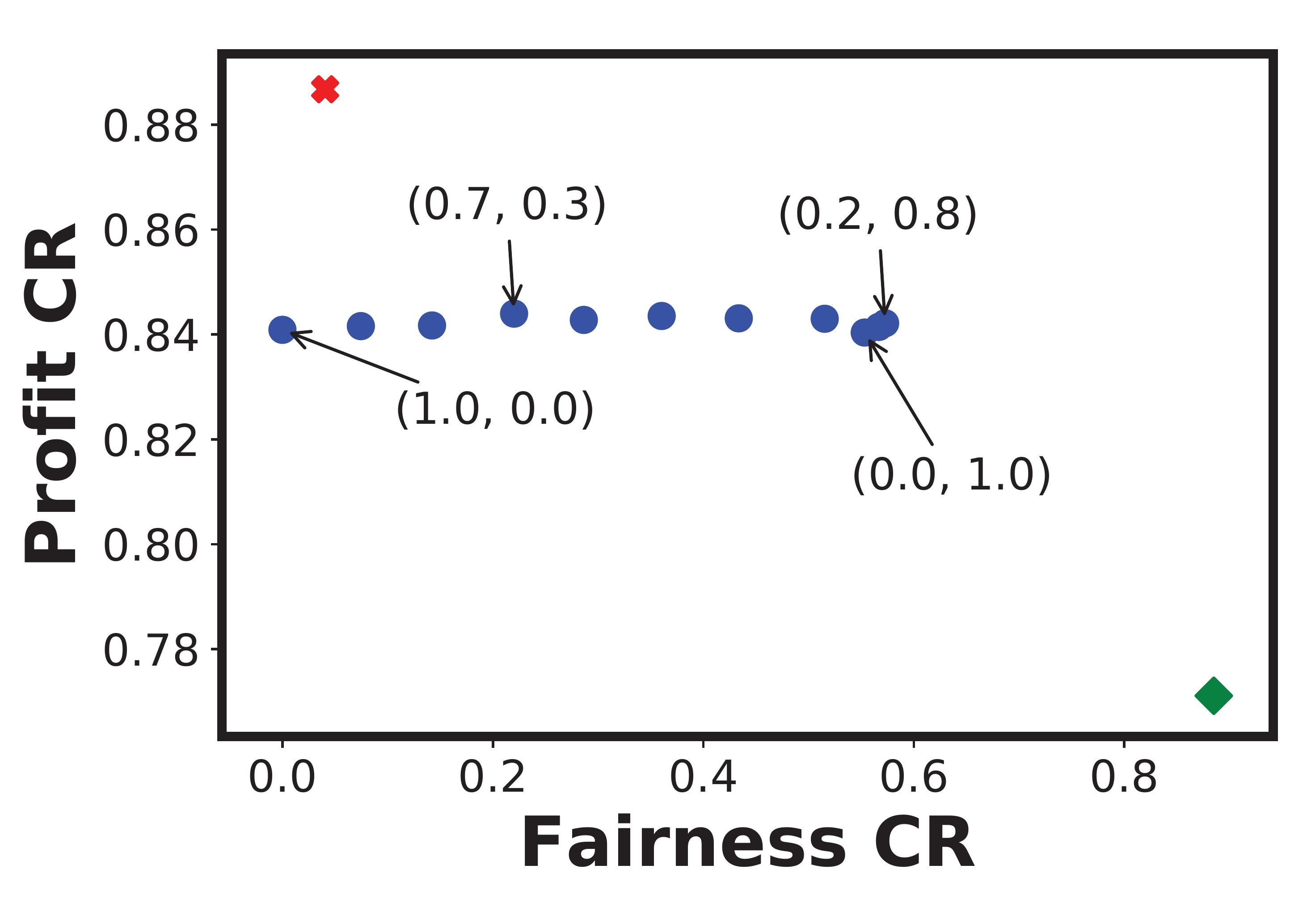}
  }
  \caption{Performance comparisons with Greedy\_P and Greedy\_F.}
  \label{fig:fps_u}
\end{figure*}

\section{Experiments}
In this section, we describe our experimental results on a real dataset: the New York City yellow cabs dataset\footnote{\url{http://www.andresmh.com/nyctaxitrips/}} which contains the trip histories for thousands of taxis across Manhattan, Brooklyn, and Queens.

\xhdr{Data preprocessing}.
The dataset is collected during the year of 2013.
Each trip record includes the (desensitized) driver's license, the pick-up and drop-off locations for the passenger, the duration and distance to complete the trip, the starting and ending time of the trip and some other information such as the number of customers. Although the demographics of the drivers and riders are not recorded in the original dataset, we synthesize the racial demographics for riders and drivers in a similar way to~\cite{nanda2019}. To simplify the demonstration, we consider a single demographic factor of the race only, which takes two possible options between ``disadvantaged'' (D) or ``advantaged'' (A). We set the ratio of D to A to be $1:2$ among riders, which roughly matches the racial demographics of NYC~\cite{ridersref}. Similarly, we set the ratio of D to A among drivers to be $1:2$~\cite{driversref}. The acceptance rates among the four possible  driver-rider pairs (based on race status only), (A,A), (A,D), (D,A), (D,D), are set to be $0.6,0.1,0.1$ and $0.3$, respectively. These probabilities are then scaled up by a factor $\eta$ such that $p_f = \eta+(1 - \eta)\cdot p_f$. In our experiments we set $\eta = 0.5$. Note that we can apply our model straightforwardly to the case when the real-world distribution of $\{p_f\}$ values is known or can be learned.  We collect records during the off-peak period of 4--5 PM when a lot of drivers are on the road while the requests are relatively lower than peak hours.
On January $31$, 2013, $20,701$ trips were completed in the off-peak hour (from 16:00 to 17:00), compared to $35,109$ trips in the peak hour (from 19:00 to 20:00). We focus on longitude and latitude ranging from $(-73,-75)$ and $(40.4,40.95)$ respectively.
We partition the area into $40\times11$ grids with equal size.
Each grid is indexed by a unique number to represent a specific pick-up and drop-off location. 

We construct the compatibility  graph $G=(U,V,E)$ as follows. Each $u \in U$ represents a driver type which has attributes of the starting location and race. Each $v \in V$ represents a request type which has attributes of the starting location, ending location, and race. We downsample from all driver and request types such that $|U|=57$ and $|V|=134$.  For each driver type $u$, we assign its capacity $B_u$ with a random value uniformly sampled from $[1,B]$ where we vary $B \in \{10,15,20,25\}$.  
For each request of type $v$, we sample a random patience value $\Del_v$ uniformly from $\{1,2\}$ and a random arrival rate $r_v \sim \mathcal{N}(5, 1)$  (Normal distribution), and then set $T=\sum_{v \in V} r_v$. We add an edge $f=(u,v)$ if the Manhattan distance between starting location of request type $v$ and the location of driver type $u$ is not larger than $1$. The profit $w_f$ for each $f$ is defined as the normalized trip length of the request type $v$ such that $0 \le w_f \le 1$. 


\xhdr{Algorithms}.
We test the $\lpalg(\alp,\beta)$ with $\alp+\beta=1$ against two natural heuristic baselines, namely Greedy-P (short for Greedy-Profit) and Greedy-F (short for Greedy-Fairness)\footnote{A future direction is to consider a hybrid version of Greedy-P and Greedy-F, which will optimize the two objectives simultaneously.}. Suppose a request type of $v$ arrives at time $t$. Recall that $E_v$ is the set of neighboring edges incident to $v$ (\ie the set of assignments feasible to $v$). Let $E'_v \subseteq E_v$ be the set of \emph{available}  assignments $f=(u,v)$ such that there exists at least one drive of type $u$ at $t$. For Greedy-P, it will repeat greedily selecting an available assignment $f \in E'_v$ with the maximum weight $w_fp_f$ over $E'_v$ (breaking ties arbitrarily) until either $v$ accepts a driver or $v$ runs out of patience. In contrast, Greedy-F will repeat greedily selecting an available $f=(u^*,v) \in E'_v$ with $u^*$ having the least matching rate before either $v$ accepts a driver or leaves the system. We run all $\lpalg(\alp,\beta)$ algorithms for $1000$ independent trials and take the average as the expectations.
We also run Greedy-P and Greedy-F for $1000$ instances and take the average values as the final performance. Note that we use \LP-\eqref{obj-1} and \LP-\eqref{obj-2} as the default benchmarks for profit and fairness, respectively.

\xhdr{Results and discussions}.
Figure~\ref{fig:pfva_u} shows the results of competitive ratios for the proposed algorithm with different values of $\alp$ with ($\beta=1-\alp$). We can observe that the profit and fairness competitive ratios of $\lpalg$ always stay above the theoretical lower bounds (in dotted lines), as predicted in Theorem~\ref{thm:main-1}. The gaps between performances and lower bounds suggest that theoretical worst scenarios occur rarely in the real world. Note that when $B =25$ and $\alp=1$ as shown in Figure~\ref{fig:pfva_25}, the lower bound is tight and matches the fairness performance. 

Figure~\ref{fig:fps_u} shows the profit and fairness performances of $\lpalg$  compared to Greedy-P and Greedy-F. Here are a few interesting observations. (1) As for profit, Greedy-P can always beat Greedy-F but not necessarily for $\lpalg$. The advantage of Greedy-P over $\lpalg$ becomes more apparent when $B$ is large and less when $B$ is small. Note that in our experiment, the expected total number of arrivals of riders is fixed and therefore, $B$ directly controls the degree of imbalance between drivers and riders.  When $B$ is larger, we have more available drivers compared to riders and thus, Greedy-P will outperform all the rest for profit. When $B$ is small, however, we really need to carefully design the policy to boost profit. That's why $\lpalg$ becomes dominant. (2) As for fairness, Greedy-F seemingly can always dominate the rest, though $\lpalg$ shows high flexibility in the fairness performance. $\lpalg$ shows a relatively low sensitivity toward the first parameter $\alp$ for profit while high sensitivity toward the second parameter $\beta$ for fairness: the latter becomes particularly obvious when $B$ is large.

\section{Conclusion}

In this paper, we present a flexible approach for matching requests to drivers to balance the two conflicting goals, maximizations of income equality among all rideshare drivers and the total revenue earned by the system. Our proposed approach allows the policy designer to specify how fair and how profitable they want the system to be via two separate parameters.  Extensive experimental results on the real-world dataset show that our proposed approaches not only are far above the theoretical lower bounds but also can smoothly tradeoff the two objectives between the two natural heuristics. Our work opens a few directions for future research. The most direct one is to shorten the gap between the sum of ratios of profit and fairness achieved by $\alg$ (which is $0.46$). It will be interesting to give a tighter online analysis than what are presented here or offer a sharper hardness result which suggests the sum of the two ratios should be much lower than $1$.

{\small
\bibliographystyle{named}
\bibliography{stable_ref}
}

\onecolumn

\section{Appendix}

\subsection{Proof of Theorem \ref{thm:main-2}}\label{sec:app-main-2}

\mainTwo*

Here are two main ingredients to prove Theorem \ref{thm:main-2}. Assume at time $t$, each $u$ is available with probability $\gam_t$, and $v$ arrives at $t$. Recall that $E_{v,t}$ is the set of available edges in $E_v$ at time $t$.

\begin{lemma}\label{lem:app-1}
Each edge $f \in E_{v,t}$ will be probed with probability at least $ \mu_t x_f^*/r_v$ if $\sr(\x^v)$ is invoked and at least  $ \mu_t y_f^*/r_v$ if $\sr(\y^v)$ is invoked without attenuation. 
\end{lemma}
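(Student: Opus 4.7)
The plan is to carry out an edge-by-edge analysis analogous to Lemma~\ref{lem:2}, but refined using the invariant (enforced by the vertex-attenuation in Step~\ref{alg:att-3}) that each offline vertex is available at time $t$ with probability exactly $\gam_t$, independently of the GKPS randomness and the permutation order within round $t$. Fix an edge $f=(u,v)\in E_{v,t}$ and condition on $v$ arriving at $t$ and $u$ being available. Under $\sr(\x^v)$, edge $f$ is probed iff (i) the random binary vector $\X^v$ output by GKPS on $\x^v$ satisfies $X^v_f=1$, and (ii) no earlier edge $f'=(u',v)$ in the random permutation $\pi$ is itself probed and matched (which is the only way the probing loop would halt before reaching $f$). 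By the marginal-distribution property of GKPS, (i) occurs with probability $x_f^*/r_v$, so it suffices to lower-bound the conditional probability of (ii) given $X^v_f=1$.

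For a given $f'=(u',v)\ne f$ to be probed and matched, four events must occur simultaneously: $f'$ precedes $f$ in $\pi$, $X^v_{f'}=1$, $u'$ is available at $t$, and $f'$ is present when probed. A union bound over $f'\in E_v\setminus\{f\}$ gives
\[
\Pr[\text{some earlier } f' \text{ succeeds}\mid X^v_f=1]\le \sum_{f'\ne f}\Pr[f'\prec_\pi f]\cdot\Pr[X^v_{f'}=1\mid X^v_f=1]\cdot\Pr[u'\text{ avail.}]\cdot p_{f'}.
\]
The permutation order is uniform and independent of everything else, so $\Pr[f'\prec_\pi f]=1/2$; by the negative-correlation property of GKPS, $\Pr[X^v_{f'}=1\mid X^v_f=1]\le x_{f'}^*/r_v$; by the vertex-attenuation invariant, $\Pr[u'\text{ avail.}]=\gam_t$; and edge existences are independent across edges, contributing $p_{f'}$.

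Combining these bounds and invoking Constraint~\eqref{cons:mat-v}, namely $\sum_{f'\in E_v}x_{f'}^* p_{f'}\le r_v$, the total is at most $(\gam_t/2)\cdot(1/r_v)\sum_{f'\in E_v}x_{f'}^* p_{f'}\le \gam_t/2$. Hence $f$ is probed with probability at least $(x_f^*/r_v)(1-\gam_t/2)=\mu_t x_f^*/r_v$, which gives the first statement. The argument for $\sr(\y^v)$ is identical with $\y^*$ replacing $\x^*$, since $\y^*$ also satisfies Constraint~\eqref{cons:mat-v}. The main obstacle, and the reason for carefully following the template of Lemma~\ref{lem:2}, is ensuring that the three sources of randomness — the GKPS output on $\x^v$, the uniform permutation $\pi$, and the independent availability/existence realizations — can be treated as conditionally independent so the union bound factors cleanly; the vertex-attenuation step is precisely what legitimizes replacing $\Pr[u'\text{ avail.}]$ by the common scalar $\gam_t$ inside the sum, decoupling what would otherwise be intricate cross-round correlations carried over from previous arrivals.
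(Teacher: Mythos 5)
Your proof is correct and follows essentially the same route as the paper's: marginal distribution gives the $x_f^*/r_v$ factor, a union bound (equivalently Markov on the sum of indicators) over competing edges $f'$ combines the $1/2$ from the uniform permutation, negative correlation for $X^v_{f'}$, the availability probability $\gam_t$, and $p_{f'}$, and Constraint~\eqref{cons:mat-v} closes the bound at $1-\gam_t/2=\mu_t$. The only point where the paper is slightly more careful is the availability term: it bounds the \emph{conditional} quantity $\E[H_{f'}\mid H_f=1]\le\gam_t$ by appealing to Lemma~3.1 of \cite{BSSX17} rather than treating $\Pr[u'\text{ available}]=\gam_t$ as factoring out independently, which is exactly the decoupling issue you flag at the end of your argument.
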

\begin{proof}
From definition, we see $\mu_t=1-\gam_t/2$. We present a similar but refined version of proof to that in Lemma~\ref{lem:2}. Consider a given $f=(u,v) \in E_{v,t}$, and let $\kap^{\x}_{f,t}$ be the probability that $f$ is probed in $\sr(\x^v)$. Similar to before, let $\X^v$ be the rounded binary vector from ${\x}^v$ and $X^v_f$  be the entry of $\X^v$ on $f$. 
Let $E_{v,\neg f}$ be the set of edges in $E_v$ excluding $f=(u,v)$. For each $f' \in E_{v,\neg f}$, let $Y_{f'}$ indicate if $f'$ falls before $f$ in the random order $\pi$ and $Z_{f'}$ indicate if $f'$ is present when probed. We introduce an additional indicator $H_{f'}$ to show if $f'=(u',v) \in E_{v,t}$ (\ie $u'$ is available at $t$). Note that here we assume not only $u$ is available (\ie $H_f=1$) but also $v$ arrives at $t$ and $\sr(\x^v)$ is invoked. Thus, we have
\begin{align}
\kap^{\x}_{f,t} & \ge \Pr[ X^v_f=1]\Pr \Big[ \sum_{f'\in E_{v,t}} X^v_{f} Y_{f'} Z_{f'}=0  ~ |~ H_f=1, X^v_f=1 \Big] \\
&=\frac{x_f^*}{r_v} \Pr \Big[ \sum_{f'\in E_{v,\neg f}} H_{f'} X^v_{f'} Y_{f'} Z_{f'}=0  ~ |~ H_f=1, X^v_f=1 \Big] \\
& =\frac{x_f^*}{r_v} \Big(1-\Pr \Big[ \sum_{f'\in E_{v,\neg f}} H_{f'} X^v_{f'} Y_{f'} Z_{f'} \ge 1  ~ |~ H_f=1, X^v_f=1 \Big] \Big) \\
& \ge \frac{x_f^*}{r_v} \Big(1-\E \Big[ \sum_{f'\in E_{v,\neg f}} H_{f'} X^v_{f'} Y_{f'} Z_{f'}~|~ H_f=1, X^v_f=1 \Big] \Big) \\
& = \frac{x_f^*}{r_v} \Big(1-\sum_{f'\in E_{v,\neg f}} \E \Big[  H_{f'} X^v_{f'} Y_{f'} Z_{f'}~|~ H_f=1, X^v_f=1 \Big] \Big) \\
& \ge  \frac{x_f^*}{r_v} \Big(1-\sum_{f'\in E_{v,\neg f}} \gam_t \frac{x_{f'}^*}{r_v} \frac{p_{f'}}{2} \Big) \label{ineq:appen-1}\\
& \ge   \frac{x_f^*}{r_v} \Big(1-\frac{\gam_t}{2} \Big)
\end{align} 
Note that Inequality~\eqref{ineq:appen-1} is due to facts that $\E[H_{f'}| H_f=1] \le \gam_t$ from Lemma 3.1 in~\cite{BSSX17} and $\E[X^v_{f'} |X^v_f=1] \le x_{f'}^*/r_v$ from negative correlation in dependent rounding. Similarly we can prove for the case of $\sr(\y^v)$. 
\end{proof}

Lemma~\ref{lem:app-1} justifies the edge-attenuation steps~\eqref{alg:att-1} and~\eqref{alg:att-2} in $\alg(\alp, \beta)$. The lemma below will instead justify the vertex-attenuation step~\eqref{alg:att-3} in $\alg(\alp, \beta)$.

\begin{lemma}\label{lem:app-2}
Consider a given  $u \in U$ and assume that $u$ is available at $t$ with probability equal to $\gam_t$. We have that $u$ is available at $t+1$ with probability at least $\gam_t(1-\mu_t/T)$ before any vertex-attenuation during $t+1$. 
\end{lemma}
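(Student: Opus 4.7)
The plan is to show that the only way $u$ becomes unavailable between times $t$ and $t+1$ is to be matched during round $t$, and then to bound that matching probability using the edge-attenuation steps of $\alg$ together with the LP matching constraint~\eqref{cons:match-u}.

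First I would write the survival identity
\[
\Pr[\SF_{u,t+1}] \;=\; \Pr[\SF_{u,t}] \cdot \bigl(1-\Pr[u \text{ gets matched in round } t \mid \SF_{u,t}]\bigr),
\]
so that it suffices to verify $\Pr[u \text{ matched in round } t \mid \SF_{u,t}] \le \mu_t/T$. To do this, I would decompose the conditional matching probability over which request arrives and which subroutine is invoked. For each $f=(u,v) \in E_u$, the joint event ``$v$ arrives at $t$, the $\sr(\x^v)$ branch is taken, $f$ is probed, and $f$ is present'' contributes $\tfrac{r_v}{T}\cdot\alp\cdot\tfrac{\mu_t x_f^*}{r_v}\cdot p_f$, where I am using that the edge-attenuation step~\eqref{alg:att-1} in $\alg$ pins the probe probability to exactly $\mu_t x_f^*/r_v$ conditional on $\SF_{u,t}$ and the arrival of $v$. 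The parallel $\sr(\y^v)$ branch analogously contributes $\tfrac{r_v}{T}\cdot\beta\cdot\tfrac{\mu_t y_f^*}{r_v}\cdot p_f$. These contributions are additive because only one request arrives per round (so events across distinct $v$ are disjoint) and because within a round the $\x$- and $\y$-branches are disjoint by the algorithm's specification.

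Collecting terms and cancelling $r_v$ I would get
\begin{align*}
\Pr[u \text{ matched in round } t \mid \SF_{u,t}]
&= \frac{\mu_t}{T}\Bigl(\alp \sum_{f \in E_u} x_f^* p_f + \beta \sum_{f \in E_u} y_f^* p_f\Bigr) \\
&\le \frac{\mu_t}{T}(\alp+\beta) \;\le\; \frac{\mu_t}{T},
\end{align*}
where the first inequality applies constraint~\eqref{cons:match-u} with $B_u=1$ to both $\x^*$ and $\y^*$, and the second uses the hypothesis $\alp+\beta\le 1$. Substituting back into the survival identity and using $\Pr[\SF_{u,t}]=\gam_t$ yields $\Pr[\SF_{u,t+1}] \ge \gam_t(1-\mu_t/T) = \gam_{t+1}$, which is precisely the claim.

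The only delicate point is the justification that the edge-attenuation really makes the conditional probe probability equal to $\mu_t x_f^*/r_v$ rather than merely upper-bounding it; this is where Lemma~\ref{lem:app-1} is essential, since it guarantees the unattenuated conditional probe probability already dominates $\mu_t x_f^*/r_v$ and hence leaves slack to attenuate downward to exact equality. Beyond this observation, the argument is just bookkeeping with marginals and the matching constraint, so I expect no further obstacle.
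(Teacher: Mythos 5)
Your proposal is correct and follows essentially the same route as the paper's own proof: condition on $\SF_{u,t}$, decompose the matching probability over arriving request types and the $\x$/$\y$ branches, use the edge-attenuation guarantee that each $f\in E_{v,t}$ is probed with probability exactly $\mu_t x_f^*/r_v$ (resp.\ $\mu_t y_f^*/r_v$), and close with Constraint~\eqref{cons:match-u} ($B_u=1$) together with $\alp+\beta\le 1$. Your closing remark about why Lemma~\ref{lem:app-1} is needed to make the attenuated probe probability an exact equality is also the same justification the paper gives for that step.
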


\begin{proof}
Assume that $u$ is available at $t$ (\ie $\SF_{u,t}$), which occurs with probability $\gam_t$. Now we have that $u$ survives from round $t$ if none of $f \in E_u$ gets matched.  Thus, we have that
\begin{align}
\Pr[\SF_{u,t+1}]&=\Pr[\SF_{u,t}] \Big(1-\sum_{f=(u,v) \in E_u}\Pr[ \mbox{$f$ is matched}~|~\SF_{u,t}] \Big) \label{ineq:app-2-1}\\
&=\gam_t \left(1-\sum_{f=(u,v) \in E_u} \frac{r_v}{T} \Big( \frac{\alp \mu_t x_f^*}{r_v}+ \frac{\beta \mu_t y_f^*}{r_v} \Big) p_f\right) \label{ineq:app-2-2}\\
&=\gam_t \left(1-\sum_{f=(u,v) \in E_u} \frac{\mu_t}{T} \Big( \alp  x_f^*p_f+ \beta  y_f^* p_f\Big)\right) \label{ineq:app-2-3}\\
& \ge \gam_t (1- \mu_t/T)  \label{ineq:app-2-4}
\end{align} 
Equality~\eqref{ineq:app-2-3} is due to the fact that each $f \in E_{v,t}$ will be probed with probability equal to $\mu_t x_f^*/r_v $ and $\mu_t y_f^*/r_v$ in $\sr(\x^v)$ and $\sr(\y^v)$ respectively after edge-attenuation in $\alg(\alp, \beta)$. Inequality~\eqref{ineq:app-2-4} follows from facts that (1) $\sum_{f \in E_u} x_f^* p_f \le B_u=1$ and $\sum_{f \in E_u} y_f^* p_f \le B_u=1$ due to Constraint~\eqref{cons:match-u} in benchmark LPs; (2) $\alp+\beta \le 1$. 
\end{proof}

Lemmas~\ref{lem:app-1} and~\ref{lem:app-2} justify $\alg(\alp, \beta)$. Now we try to prove main Theorem \ref{thm:main-2}.

\begin{proof}
Consider a given $f =(u,v)$. Let $\kap^{\x}_f$ and $\kap^{\y}_f$ be the number of probed of $f$ in $\sr(\x^v)$ and $\sr(\y^v)$. Thus, we have
\begin{align}
\kap^{\x}_f &=\sum_{t=1}^T \Pr[\SF_{u,t}]\frac{r_v}{T} \alp \frac{\mu_t x_f^*}{r_v}=\alp x_f^* \sum_{t=1}^T \frac{1}{T} \mu_t \gam_t 
\end{align}
According to the definition of $\{\gam_t, \mu_t\}$,
we can verify that $\sum_{t=1}^T \frac{1}{T} \mu_t \gam_t =\frac{e-1}{e+1}$ when $T \rightarrow \infty$. Thus, we claim that $\kap^{\x}_f=\alp x_f^* \frac{e-1}{e+1}$. Similarly, we have $\kap^{\y}_f=\alp y_f^* \frac{e-1}{e+1}$. By linearity of expectation, we get our claim.
\end{proof}

	\end{document}